\newtheorem{theorem}{Theorem}
\newtheorem{lemma}{Lemma}
\newtheorem{corollary}{Corollary}
\newtheorem{example}{Example}
\newtheorem{assumption}{Assumption}
\declaretheorem[name=Theorem,numberwithin=section]{theorem}
  \def\@citecolor{blue}%
  \def\@urlcolor{blue}%
  \def\@linkcolor{blue}%
\def\orcidID#1{\smash{\href{http://orcid.org/#1}{\protect\raisebox{-1.25pt}{\protect\includegraphics{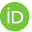}}}}}
  \def\\{}%
  \def\texttt#1{<#1>}%
\title{Recursive Reinforcement Learning}
\author{%
 {~~~~Ernst Moritz Hahn}~\orcidID{0000-0002-9348-7684} \\ 
 ~~~~University of Twente
 \And
 {~~~~~~Mateo Perez}~\orcidID{0000-0003-4220-3212} \\
 ~~~~~~University of Colorado Boulder 
 \And
 {~Sven Schewe}~\orcidID{0000-0002-9093-9518}~~~~ \\
 University of Liverpool~~~~\\
 \AND
 {Fabio Somenzi}~\orcidID{0000-0002-2085-2003} \\
 University of Colorado Boulder
 \And
 {Ashutosh Trivedi}~\orcidID{0000-0001-9346-0126} \\
 University of Colorado Boulder
 \And
 {~Dominik Wojtczak}~\orcidID{0000-0001-5560-0546}~~~~ \\
 ~University of Liverpool~~~~
}
\def\eqref#1{equation~\ref{#1}}
\def\1{\bm{1}}
\DeclareMathAlphabet{\mathsfit}{\encodingdefault}{\sfdefault}{m}{sl}
\SetMathAlphabet{\mathsfit}{bold}{\encodingdefault}{\sfdefault}{bx}{n}
\DeclareMathOperator*{\argmax}{arg\,max}
\begin{document}
\maketitle

\begin{abstract}
Recursion is the fundamental paradigm to finitely describe potentially infinite objects.
As state-of-the-art reinforcement learning (RL) algorithms cannot directly reason about recursion, they must rely on the practitioner's ingenuity in designing a suitable ``flat'' representation of the environment.
The resulting manual feature constructions and approximations are cumbersome and error-prone; their lack of transparency hampers scalability. 
To overcome these challenges, we develop RL algorithms capable of computing optimal policies in environments described as a collection of Markov decision processes (MDPs) that can recursively invoke one another.
Each constituent MDP is characterized by several entry and exit points that correspond to input and output values of these invocations.
These recursive MDPs (or RMDPs)  are expressively equivalent to probabilistic pushdown systems (with call-stack playing the role of the pushdown stack), and can model probabilistic programs with recursive procedural calls.
We introduce \emph{Recursive Q-learning}---a model-free RL algorithm for RMDPs---and prove that it converges for finite, single-exit and deterministic 
multi-exit RMDPs under mild assumptions. 
\end{abstract}

\section{Introduction}
\label{sec:intro}

Reinforcement learning~\cite{Sutton18} (RL) is a stochastic approximation based approach to optimization, where learning agents rely on scalar reward signals from the environment to converge to an optimal behavior. 
Watkins's seminal approach~\cite{watkins1989learning} to RL, known as Q-learning, judiciously combines exploration/exploitation with dynamic programming to provide guaranteed convergence~\cite{watkins1992q} to optimal behaviors in environments modeled as Markov decision processes (MDPs) with finite state and action spaces.
RL has also been applied to MDPs with uncountable state and action spaces, although convergence guarantees for such environments require strong regularity assumptions.
Modern variants of Q-learning (and other tabular RL algorithms) harness the universal approximability and ease-of-training rendered by deep neural networks~\cite{goodfellow2016deep} to discover creative solutions to problems traditionally considered beyond the reach of AI~\cite{Mnih15,vinyals2019grandmaster,Silver16}. 

These RL algorithms are designed with a flat Markovian view of the environment in the form of a ``state, action, reward, and next state'' interface~\cite{Brockm16} in every interaction with the learning agent, where the states/actions may come from infinite sets. 
When such infinitude presents itself in the form of finitely represented recursive structures, the inability of the RL algorithms to handle structured environments means that the structure present in the environment is not available to the RL algorithm to generalize its learning upon.
The work of~\cite{watkins1989learning} already provides a roadmap for hierarchically structured environments; since then, considerable progress has been made in developing algorithms for hierarchical RL~\cite{barto2003recent,dietterich2000hierarchical,sutton1999between,parr1998reinforcement} with varying optimality guarantees.
Still, the hierarchical MDPs are expressively equivalent to finite-state MDPs, although they may be exponentially more succinct
(Lemma~\ref{lem:exp_succ}).
Thus, hierarchical RL algorithms are inapplicable in the presence of unbounded recursion.

\begin{figure}
    \centering
  \begin{tikzpicture}[node distance=4cm]

    \draw(0, 0) rectangle (5.9,2.2);
    \draw (5.5, 2.4) node {$T$};

    \node[loc](u1) at (0, 1) {$u_1$};
    \node[loc](u2) at (5.9, 1) {$u_2$};

    \node[boxloc](b1) at (1.5, 1) {$b_1{:}S$};
    \node[boxloc](b2) at (2.9, 1) {$b_2{:}S$};
    \node[boxloc](b3) at (4.3, 1) {$b_3{:}S$};

    \node[port](b1n1) at (1, 1) {};
    \node[port](b1x1) at (2, 1) {};

    \node[port](b2n1) at (2.4, 1) {};
    \node[port](b2x1) at (3.4, 1) {};

    \node[port](b3n1) at (3.8, 1) {};
    \node[port](b3x1) at (4.8, 1) {};
    
    \draw[trans] (u1) -- node[below]{{\small d}} node[above]{{\small $-0.5$}} (b1n1);
    \draw[trans] (b1x1) --  (b2n1);
    \draw[trans] (b2x1) --  (b3n1);
    \draw[trans] (b3x1) --node[below]{{\small c}} node[above]{{\small $-0.5$}} (u2);
    \draw[trans] (u1) -- +(1, -0.9) -- node[above]{{$m, -8$}}  +(5.3, -0.9) -- (u2);

    \begin{scope}[xshift = 6.5 cm]
    \draw(0, 0) rectangle (3.1,2.2);
    \draw (3, 2.4) node {$S$};
    
    \node[boxloc](b4) at (1.5, 1.5) {$b_4{:}H$};
    \node[boxloc](b5) at (1.5, 0.5) {$b_5{:}S$};
    
    \node[loc](u3) at (0, 1) {$u_3$};
    \node[loc](u4) at (3.1, 1) {$u_4$};
    
    \node[probloc](p1) at (0.6, 1.5) {};
    \node[probloc](p2) at (0.6, 0.5) {};

    \draw[-] (u3) -- node[above]{{\small r, $-1.5~~~~~~~~~$}} (p1);
    \draw[-] (u3) -- node[below left]{{\small f, $-1$}}  (p2);
    
    \node[port](b4n1) at (1, 1.5) {};
    \node[port](b5n1) at (1, 0.5) {};
    \node[port](b4x2) at (2, 1.7) {};
    \node[port](b4x1) at (2, 1.3) {};
    \node[port](b5x1) at (2, 0.5) {};

    \draw[trans](p1) -- node[above]{{\tiny 0.3}} (b4n1); 
    \draw[trans](p2) -- node[below]{{\tiny 0.4}} (b5n1); 
    \draw[trans] (p1) -- +(0, -0.45) -- +(2, -0.45) -- (u4);
    \draw[trans] (p2) -- +(0, 0.45) -- +(2, 0.45) -- (u4);

    \draw[trans,thick,draw=red!50!black](b4x2) -- node[above]{{\small $+0.2$}}+(0.8, 0) --  (u4); 
    \draw[trans](b4x1) -- node[above]{{\small $+0.2$}} (u4); 
    \draw[trans](b5x1) --(u4); 

    \end{scope}
    
    \begin{scope}[xshift = 10.2cm]
    \draw(0, 0) rectangle (3.2,2.2);
    \draw (3, 2.4) node {$H$};
    
    \node[boxloc](b6) at (1, 1) {$b_6{:}H$};
    \node[boxloc](b6) at (2.2, 1) {$b_7{:}H$};

    \node[port](b6n1) at (0.5, 1) {};
    \node[port](b7n1) at (1.7, 1) {};
    \node[port](b6x1) at (1.5, 0.8) {};
    \node[port](b6x2) at (1.5, 1.2) {};
    \node[port](b7x1) at (2.7, 0.8) {};
    \node[port](b7x2) at (2.7, 1.2) {};
    
    \node[loc](u5) at (0, 1) {$u_5$};
    \node[loc](u6) at (3.2, 0.5) {$u_6$};
    \node[loc,thick,draw=red!50!black](u7) at (3.2, 1.5) {$u_7$};

    \node[probloc](p3) at (0.4, 0.1) {};
    \draw[-] (u5) -- node[pos=0.8, left]{{\small n,$-0.01$}} (p3);
    \draw[trans](p3) -- node[below right]{{\tiny 0.3}} (b6n1); 
    \draw[trans](p3) --+(2.5, 0) --  (u6);
    
    \draw[trans,thick,draw=red!50!black](u5) -- +(0.5, 0.8) --node[above]{y, {\small -$0.2$}} +(2.5, 0.8)   --  (u7);
    \draw[trans,thick,draw=red!50!black](b6x2) -- +(0.2, 0.3) --  (u7);
    \draw[trans,thick,draw=red!50!black](b7x2) --  (u7);
    
     \draw[-](b6x1) --(b7n1); 
    \draw[-](b7x1) --(u6); 
    \end{scope}
  \end{tikzpicture}
\caption{A recursive Markov decision process with three components $T$, $S$, and $H$.}
\label{fig:rmdp}
\end{figure}
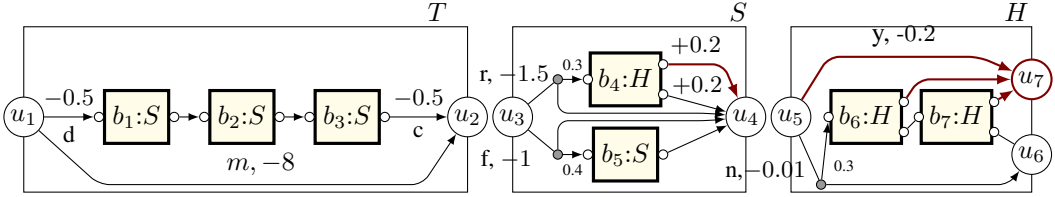
On the other hand, recursion occurs naturally in human reasoning~\cite{Corballis}, mathematics and computation~\cite{skolem1923begrundung,soare2016turing}, and physical environments~\cite{meiss2007differential}. 
Recursion is a powerful cognitive tool in enabling a divide-and-conquer strategy~\cite{cormen2022introduction} to problem solving (e.g., tower of Hanoi, depth-first search) and, consequently, recursive solutions enhance explainability in the form of intuitive inductive proofs of correctness.
Unlike flat representations, the structure exposed by recursive definitions enables generalizability. 
Recursive concepts, such as recursive functions and data structures, provide scaffolding for efficient and transparent algorithms.
Finally, the models of physical environments express the system evolution in the form of recursive equations.
We posit that the lack of RL algorithms in handling recursion is an obstacle to their applicability, explainability, and  generalizability.
This paper aims to fill the gap by studying \emph{recursive Markov decision processes}~\cite{EtessamiY12} as environment models in reinforcement learning.
We dub this setting recursive reinforcement learning.

\paragraph{MDPs with Recursion.}
A recursive Markov Decision Process (RMDP)~\cite{EtessamiY12} is a finite collection of special MDPs, called component MDPs, with special entry and exit nodes that take the role of input parameter and return value, respectively.
The states of component MDPs may either be the standard MDP states, or they may be ``boxes'' with input and output ports; these boxes are mapped to other component MDPs (possibly, the component itself) with some matching of the entry and exit nodes. 
An RMDP where every component has only one exit is called a $1$-exit RMDP, otherwise we call it a general or multi-exit RMDP.
The $1$-exit RMDPs are strictly less expressive than general RMDPs as they are equivalent to functions without any return value. 
Nonetheless, $1$-exit RMDPs are more expressive than finite-state MDPs~\cite{Put94} and relate closely to controlled branching  processes~\cite{EtessamiY15}.
\begin{example}[Cloud Computing]
\label{ex1}
As an example of recursive MDP, consider the Boolean program shown in Figure~\ref{fig:BMDP}.
This example (inspired from~\cite{Hahn21a}) models a cloud computing scenario to compute a task $T$ depicted as the component $T$.
Here, a decision maker is faced with two choices: either she can choose to execute the task monolithically (with a cost of $8$ units) or chose to decompose the task into three $S$ tasks. The process of decomposition and later combining the results cost $0.5$ units.
Each task $S$ can either be executed on a fast, but unreliable server that costs $1$ unit, but with probability $0.4$ the server may crash, and require a recomputation of the task.
When executed on a reliable server, the task $S$ costs $1.5$ units, however the task may be interrupted by a higher-priority task and the decision maker will be compensated for this delay by earning a $0.2$ unit reward.
During the interrupt service routine $H$, there is a choice to upgrade the priority of the taks for a cost of $0.2$ units. Otherwise, the interrupt service routing costs $0.01$ unit (due to added delay) and the interrupt service routine itself can be interrupted, causing the service routine to be re-executed in addition to that of the new interrupt service routine.
The goal of the RL agent is to compute an optimal policy that maximize the total reward to termination.

This example is represented as a recursive MDP in Figure~\ref{fig:rmdp}.
This RMDP has three components $T$, $S$, and $H$.
The component $T$ has three boxes $b_1$, $b_2$, and $b_3$ all mapped to components $S$.
The component $T$ and $S$ both have single entry and exit nodes, while the component $H$ has two exit nodes.
Removing the thick ({\color{red!50!black}maroon}) transitions and the exit $u_7$ makes the RMDP $1$-exit.
The edges specify both, the name of the action and the corresponding reward. While the component $T$ is non-stochastic, components $S$ and $H$ both have stochastic transitions depicted by the grey circle.

\end{example}

Recursive MDPs strictly generalize finite-state MDPs and hierarchical MDPs, and semantically encode countable state MDPs with the context encoding the stack frame of unbounded depth.
RMDPs generalize several well-known stochastic models including stochastic context-free grammars~\cite{manning1999foundations,lari1990estimation} and multi-type branching processes~\cite{harris1964theory,vatutin1993branching}.
Moreover, RMDPs are expressively equivalent to probabilistic pushdown systems (MDPs with unbounded stacks), and can model probabilistic programs with unrestricted recursion.   
Given their expressive power, it is not surprising that reachability and termination problems for general RMDPs are undecidable. 
Single-exit RMDPs, on the other hand, are considerably more well-behaved with decidable termination~\cite{EtessamiY12} and total reward optimization under positive reward restriction~\cite{EtessamiWY19}.
Exploiting these properties, a convergent RL algorithm~\cite{Hahn21a} has been proposed for $1$-exit RMDPs with positive reward restriction. However, to the best of our knowledge, no RL algorithm exists for general RMDPs.
\begin{figure}
\newsavebox\testboxA
\newsavebox\testboxB
\newsavebox\testboxC

\setbox\testboxA=\hbox{%
    \begin{lstlisting}[language=Python,frame=none]
      def T():
        a =  %?%(%$\{\tt mono, divide\}$%);
        if a = mono: 
          execute_mono() # -$8
        else :
          decompose() # -$0.5
          S()
          S()
          S()
          combine()  # -$0.5
        return
    \end{lstlisting}
}
\setbox\testboxB=\hbox{%
    \begin{lstlisting}[language=Python,frame=none]
      def S(): 
        a =  %?%(%$\{\tt reliable, fast\}$%);
        if a = fast:
          try:
            execute_fast() # -$1 
          except Crash: # prob = 0.4
            S()  
        else:  
          try:
            execute_reliable() #-$1.5
          except Interrupt: #prob = 0.3
            H() #   +$0.2
        return    
  \end{lstlisting}
}
\setbox\testboxC=\hbox{%
    \begin{lstlisting}[language=Python,frame=none]
      def H():
       upgrade =  %?%(%$\{\tt yes, no\}$%);
       if upgrade = no: 
         try:
           ISR() # -$0.01 
         except Interrupt: #prob = 0.3
           if (H()):
             return 1
           if (H())
             return 1
         return 0
       else :  #-$0.2
         return 1 
  \end{lstlisting}
}
\centering
\begin{adjustbox}{minipage=0.32\linewidth}
\subfigure[Task $T$]{\usebox{\testboxA}} 
\end{adjustbox}
\hfill
\begin{adjustbox}{minipage=0.32\linewidth}
\subfigure[Task $S$]{\usebox{\testboxB}} 
\end{adjustbox}
\hfill
\begin{adjustbox}{minipage=0.33\linewidth}
\subfigure[Task $H$]{\usebox{\testboxC}}
\end{adjustbox}
  \caption{\footnotesize A probabilistic Boolean program sketch where the choice of the hole ($?$) is to be filled by RL agent.}
\label{fig:BMDP}
\end{figure}

\paragraph{Applications of Recursive RL.} 
Next, we present some paradigmatic applications of recursive RL.
\begin{itemize}
    \item {\bf Probabilistic Program Synthesis.} As shown in Example~\ref{ex1}, RMDPs can model procedural probabilistic Boolean program. Hence, the recursive RL can be used for program synthesis in unknown, uncertain environments.  Boolean abstractions of programs \cite{ball2000bebop} are particularly suited to modeling as RMDPs.
    Potential applications include program verification~\cite{ball2000bebop,ball2000slam,esparza2000efficient} and program synthesis~\cite{griesmayer2006repair}.
    
    \item {\bf Context-Free Reward Machines.} Recently, reward machines have been proposed to model non-Markovian reward signals in RL. In this setting, a regular language extended with the reward signals (Mealy machines) over the observation sequences of the MDP is used to encode reward signals. 
    In this setting the RL algorithms operate on the finite MDP given by the product of the MDP with the reward machine. 
    Following the Chomsky hierarchy, context-free grammars or pushdown automata can be used to provide more expressive reward schemes than regular languages.
    As an example of such a more expressive reward language, consider a grid-world with a reachability objective with some designated charging stations, where $1$-unit dwell-time charges the battery by $1$-unit.
    If every action discharges the battery by $1$-unit, the reward scheme to reach the target location without ever draining the battery cannot be captured by a regular reward machine.
    On the other hand, this reward signal can be captured with an RMDP, where charging by $1$-unit amounts to calling a component and discharging amounts to returning from the component such that the length of the call stack encodes the battery level.
    More generally, any context-free requirement over finite-state MDPs can be captured using general RMDPs.
\item  {\bf Stochastic Context-Free Grammars.} Stochastic CFGs and branching decision processes can capture a structured evolution of a system. These can be used for modeling disease spread, population dynamics, and natural languages. RRL can be used to learn optimal behavior in systems expressed using such stochastic grammars.
\end{itemize}

\paragraph{Overview.} After a motivating example, we begin the technical presentation by providing the formal definition of RMDPs and the total reward problem. We then show undecidability for the general problem. In Section~\ref{sec:deepMulti}, we develop PAC learning results for a mildly restricted setting. In Section~\ref{sec:deepSingle}, we focus on the single-exit setting and introduce Recursive Q-learning.

\section{Recursive Markov Decision Processes}
\label{sec:rmdp}

An MDP $\Mm$ is a tuple $(A, S, T, r)$ where
$A$ is a finite set of {\it actions}, $S$ is a finite set of states, 
$T : S \times A \to \DIST(S)$ is the probabilistic transition function, and $r: S \times A \to \Real$ is the reward function. 
We say that an MDP $\Mm$ is finite if both $S$ and $A$ are finite. For any state $s \in S$, $A(s)$ denotes the set of actions that may be selected in
state $s$.

A recursive Markov decision process (RMDP)~\cite{EtessamiY12} is a tuple $M = (M_1, \ldots ,M_k)$, where
each {\em component \/}
$M_i = (A_i, N_i , B_i, Y_i, \En_i, \Ex_i, \delta_i)$  consists of:
\begin{itemize} 
\item   
    A set $A_i$ of actions;
\item  
    A set $N_i$ of {\em nodes\/}, with a distinguished subset $\En_i$
    of {\em entry} nodes and a (disjoint) subset $\Ex_i$ of {\em exit} nodes (we assume an arbitrary but fixed ordering on $\Ex_i$ and $\En_i$); 
\item 
    A set $B_i$ of {\em boxes\/} along with a mapping $Y_i: B_i  \mapsto \{ 1, \ldots,k \}$
that assigns to every box (the index of) a component.
To each box $b \in B_i$, we associate a set of {\em call ports}, $\Call_b =
\{ (b,en) \mid en \in \En_{Y(b)} \}$,
and a set of {\em return ports}, $\return_b = \{ (b,ex) \mid ex \in \Ex_{Y(b)}\}$;
\item 
we let $\Call^i = \cup_{b \in B_i} \Call_b$,
$\return^i = \cup_{b \in B_i} \return_b$,
and let $Q_i =N_i \cup \Call^i \cup \return^i$ be the set of all nodes, call ports and
return ports; we refer to these as the {\em vertices} of component $M_i$.

\item 
A transition function $\delta_i : Q_i \times A_i \to \DIST(Q_i)$, where, for each
tuple $\delta_i(u, a)(v) = p$ is the transition probability of a transition from the source $u \in (N_i \setminus \Ex_i) \cup \return^i$ to the destination $v \in (N_i \setminus \En_i) \cup \Call^i$; we often write $p(v | u, a)$ for $\delta_i(u, a)(v)$.
\item 
A reward function $r_i: Q_i \times A_i \to \Real$ is the reward associated with transitions.
\end{itemize}

We assume that the set of boxes $B_1,\ldots, B_k$ and set of nodes $N_1, N_2, \ldots, N_k$ are mutually
disjoint.
We use symbols $N, B, A, Q, \delta$ to denote
the union of the corresponding symbols over all components. We say that an RMDP is finite if $k$ and all $A_i$, $N_i$ and $B_i$ are finite.

An execution of an RMDP begins at an entry node of some component and, depending
upon the sequence of input actions, the state evolves naturally like an MDP according to the transition distributions. 
However, when the execution reaches an entry port of a box, this box is stored on a stack of pending calls, and the execution continues naturally from the corresponding entry node of the component mapped to that box. 
When an exit node of a component is encountered, and if the stack of pending calls is empty then the run terminates; otherwise, it pops the box from the top of the stack and jumps to the exit port of the just popped box corresponding to the just reached exit of the component. 
The semantics of an RMDP is an infinite state MDP, whose states are pairs consisting of a sequence of boxes, called the context, mimicking the stack of pending calls and the current vertex.

The height of the call stack is incremented (decremented) when a call (return) is made. A stack height of $0$ refers to termination, while the empty stack has height $1$.

The semantics of a recursive MDP $M = (M_1, \ldots, M_k)$ with 
$M_i = (A_i, N_i , B_i, Y_i, \En_i, \Ex_i, \delta_i,r_i)$ are given as a (infinite-state) MDP $\sem{M} = (A_M, S_M, T_M, r_M)$ where 
\begin{itemize}
\item
  $A_M = \cup_{i=1}^{k} A_i$ is the set of actions;
\item
  $S_M \subseteq B^* {\times} Q$ is the set of states, consisting of the stack context and the current node;
\item
  $T_M : S_M {\times} A_M \to \DIST(S_M)$ is the transition
  function such that for $s = (\sseq{\kappa}, q) \in S_M$ and
  action $a \in A_M$, the distribution $\delta_M(s, a)$ is defined as:
  \begin{enumerate}
  \item
    if the vertex $q$ is a call port, i.e. $q = (b, en) \in \Call$,
    then  $\delta_M(s, a)(\sseq{\kappa, b}, en) = 1$;
  \item
    if the vertex $q$ is an exit node, i.e. $q = ex \in \Ex$,
    then if $\kappa = \sseq{\emptyset}$ then the process terminates
    and otherwise $\delta_M(s, a)(\sseq{\kappa'}, (b, ex)) = 1$ where
    $(b, ex) \in \return(b)$ and $\kappa = \sseq{\kappa', b}$;
  \item
    otherwise, 
    $\delta_M(s, a)(\sseq{\kappa}, q') = \delta(q, a)(q')$.
  \end{enumerate}
 \item the reward function $r_M: S_M \times A_M \to \Real$ is such that
 for $s = (\sseq{\kappa}, q) \in S_M$ and
  action $a \in A_M$, the reward $r_M(s, a)$ is zero if $q$ is either a call port or the exit node, and otherwise $r_M(s, a)(\sseq{\kappa}, q') = r(q, a)(q')$.
  We call the maximum value of the absolute one-step reward the \emph{diameter} of an RMDP
  and denote it by $r_{max} = \max_{s,a} |r(s,a)|$.
\end{itemize}

Given the semantics of an RMDP $M$ as an (infinite) MDP $\sem{M}$, the concepts of strategies as well as positional strategies are well defined.
We distinguish a special class of strategies---called {\em stackless strategies}---that are deterministic and do not depend on the history or the stack context at all. 

We are interested in computing strategies $\sigma$ that maximize the \emph{expected total reward}. Given RMDP $M$, a strategy $\sigma$ determines sequences $X_i$ and $Y_i$ of random variables denoting the $i^{th}$ state and action of the MDP $\sem{M}$. The total reward under strategy $\sigma$ and its optimal value are respectively defined as
\begin{align*}
\ETotal^M_\sigma(s) &= \lim_{N \to \infty} \eE^M_\sigma(s) 
\Bigl\{\sum\nolimits_{1 \leq i \leq N} r(X_{i-1}, Y_i) \Bigr\},
&
\ETotal^M(s) &= \sup_\sigma \ETotal^M_\sigma(s).
\end{align*}

For an RMDP $M$ and a state $s$, a strategy $\sigma$ is called {\em proper}
if the expected number of steps taken by $M$ before termination when  starting at $s$ is finite. 
To ensure that the limit above exists, as the sum of rewards can otherwise oscillate arbitrarily, we assume the following.
\begin{assumption}[Proper Policy Assumption]
\label{asm:proper}
All strategies are proper for all states.
\end{assumption}
We call an RMDP that satisfies Assumption~\ref{asm:proper} a {\em proper RMDP}. This assumption is akin to proper policy assumptions~\cite{bertsekas1991analysis} often posed on the stochastic shortest path problems, and ensures that the total expected reward is finite.
The expected total reward optimization problem over proper RMDPs subsumes the discounted optimization problem over finite-state MDPs since discounting with a factor $\lambda$ is analogous to terminating with probability $1{-}\lambda$ at every step~\cite{Sutton18}. 
The properness assumption on RMDPs can be enforced by introducing an appropriate discounting (see Appendix~\ref{sec:PAC}).

\paragraph{Undecidability.}
Given an RMDP $M$, an initial node $v$, and a threshold $D$, the \emph{strategy existence problem} is to decide whether there exists a strategy in $\sem{M}$ with value greater than or equal to $D$ when starting at the initial state $(\sseq{\emptyset}, q)$, i.e., at some entry node $q$ with an empty context.

\begin{restatable}[Undecidability of the Strategy Existence Problem]{theorem}{undec}
\label{thm:totalundec}
Given a proper RMDP 
and a threshold $D$, deciding whether there exists a strategy with expected value greater than $D$ is undecidable.
\end{restatable}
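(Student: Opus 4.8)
The plan is to prove undecidability by reduction from the emptiness (threshold) problem for probabilistic finite automata (PFA), a classical undecidable problem due to Paz: given stochastic matrices $\{M_a\}_{a\in\Sigma}$ over a finite state set $Q_P$, an initial state $q_0$, accepting states $F$, and a rational $\lambda$, it is undecidable whether some input word $w=a_1\cdots a_n$ satisfies $\mathrm{acc}(w) = e_{q_0}^{\top} M_{a_1}\cdots M_{a_n} \mathbf{1}_F \ge \lambda$, i.e.\ whether $\sup_w \mathrm{acc}(w)\ge\lambda$. The idea is that a multi-exit RMDP can, through its call stack, \emph{compose} the per-letter stochastic matrices, while the agent's action choices play the role of the blindly chosen input letters; I would then arrange that the optimal expected total reward equals $\sup_w \mathrm{acc}(w)$, so that strategy existence with $D=\lambda$ decides PFA emptiness.

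Concretely I would build one component $C$ whose exit nodes $\Ex$ are indexed by the PFA states and that offers, at each letter-choice location, two kinds of actions. A \emph{stop} action routes deterministically to the exit indexed by the current latent state, realising the identity $M_\epsilon=I$; a \emph{read-$a$} action takes a probabilistic $\delta$-transition governed by $M_a$ into a call port of a box mapped back to $C$ (the recursive self-call), and wires each return port $\return$ to the exit carrying the same state. A short induction on word length then shows that the return distribution of $C$ equals $M_w$ when the agent reads $w$ and stops. A top-level wrapper that calls $C$ once from $q_0$ and pays reward $1$ exactly on the transitions leaving the return ports associated with $F$ (legal, since return ports are neither call ports nor exit nodes) has optimal total reward $\sup_w\mathrm{acc}(w)$. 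Multi-exit is essential here: returning the simulated automaton state up the stack is precisely what single-exit RMDPs cannot do, consistent with single-exit problems being decidable.

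A first subtlety is that the encoded quantity must be the \emph{blind} value $\sup_w\mathrm{acc}(w)$ and not the fully observed optimum (which, being a finite reachability optimisation, is decidable); I would guarantee this by threading the sampled state through the recursion and the exit indexing while making the agent commit to each letter at control locations that do not expose that latent state, so the agent is effectively choosing a word obliviously, exactly as in the Etessami--Yannakakis hardness construction. The \textbf{main obstacle}, however, is the \emph{properness} requirement of Assumption~\ref{asm:proper}: the read-forever strategy never returns from $C$, so the naive $\sem{M}$ above is not proper and is not even a legal instance of the stated problem. I would overcome this by reducing instead from the threshold problem for \emph{terminating} PFA---automata equipped with absorbing accept/reject states reached with probability one---so that every \texttt{read} transition carries a probability bounded below of being absorbed. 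Then every strategy of the resulting RMDP terminates in finite expected time, making $\sem{M}$ proper and the limit defining $\ETotal$ well defined, while the optimal total reward still equals the supremum acceptance probability. The crux is to show the threshold problem remains undecidable under this terminating modification; I would establish this by exhibiting the modification as a threshold-preserving transformation of the standard PFA used in the Paz / Gimbert--Oualhadj undecidability results, so the undecidable distinction survives the enforced geometric termination.

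Finally I would discharge the routine bookkeeping: that $C$ and the wrapper respect the syntactic RMDP constraints (sources in $(N_i\setminus\Ex_i)\cup\return^i$, destinations in $(N_i\setminus\En_i)\cup\Call^i$), that the $\DIST(\cdot)$ transition functions are well defined, that all one-step rewards lie within a finite diameter $r_{max}$, and that $\ETotal^M_\sigma$ exists for every $\sigma$ by properness. Since the construction is computable from the PFA, undecidability of the terminating-PFA threshold problem transfers to the strategy existence problem over proper RMDPs, establishing the theorem.
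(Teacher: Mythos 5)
Your skeleton matches the paper's proof in its broad strokes---a reduction from a PFA threshold problem, exits of a recursive component indexed by PFA states (so multi-exit is indeed essential), and a per-step leak probability to enforce Assumption~\ref{asm:proper} (your ``terminating PFA'' is exactly the \emph{leaky PFA} of Madani, Hanks and Condon, whose threshold problem is known undecidable and which the paper simply cites, so that part of your plan is sound and needs no new transformation argument). The genuine gap is the step you yourself flag as the ``first subtlety'': blindness of the word choice. In an RMDP the controller observes the full state of the underlying infinite MDP, namely the current vertex together with the stack; there is no partial observability. In your interleaved architecture the read-$a$ transition must be distributed according to the row of $M_a$ indexed by the current latent PFA state, and since the transition function $\delta$ depends only on the current vertex (the stack below the top cannot influence it), that latent state must be encoded in the vertex at every letter-choice location---hence it is visible to the agent there. ``Control locations that do not expose the latent state'' cannot carry the $M_a$-dependent transitions, so your fix is not realizable within RMDP semantics: the agent chooses letters adaptively, knowing the sampled PFA state, the optimal value of your RMDP is the state-observing value rather than the blind value $\sup_w \mathrm{acc}(w)$ (in general strictly larger), and the reduction no longer decides the PFA problem.

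The missing idea, which is how the paper resolves this, is to decouple the agent's choices from the randomness \emph{temporally} rather than trying to hide information: the component has a single entry at which the agent selects the input word \emph{in reverse}, pushing one box per letter onto the call stack before any PFA randomness resolves (the only randomness during this phase is the $\gamma$-leak to a reward-free exit lane, which guarantees properness); a special start action then jumps to the exit indexed by the initial PFA state, and the simulation of the PFA happens entirely during the forced unwinding of the stack---when the box for letter $a$ is popped at the return port indexed by state $q$, a purely probabilistic transition realizes reading $a$ at $q$ and leads to the exit indexed by the successor state. Since no decisions remain after the start action, the word is committed obliviously, and a reward of $1$ on the final transition into the accepting exit with empty stack completes the reduction. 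Without this reversal (or some equivalent mechanism---which your proposal does not supply), the construction you describe does not establish undecidability.
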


\paragraph{PAC-learnability.}
Although it is undecidable to determine whether or not a strategy can exceed some threshold in a proper RMDP, the problem of $\varepsilon$-approximating the optimal value is decidable when parameters $c_o$, $\lambda$ and $b$ (defined below) are known. Our approach to PAC-learnability~\cite{agarwal2019reinforcement} is to learn the distribution of the transition function $\delta$ well enough and then produce an approximate, but not necessarily efficient, evaluation of our learned model.

To allow PAC-learnability, we need a further nuanced notion of $\varepsilon$-proper policies. 
A policy is called $\varepsilon$-proper, if it terminates with a uniform bound on the expected number of steps for all $\Mm'$ that differ from $\Mm$ only in the transition function, where
$\sum_{q\in S, a \in A, r\in S}|\delta_\Mm(q,a)(r) - \delta_{\Mm'}(q,a)(r)| \leq \varepsilon$ (we then say that $\Mm'$ is $\varepsilon$-close to $\Mm$), and where the support of $\delta_{M'}(q,a)$ is a subset of the support of $\delta_{M}(q,a)$ for all $q \in S$ and $a \in A$.
An RMDP is called $\varepsilon$-proper, if all strategies are $\varepsilon$-proper for $M$ for all states of the RMDP.

\begin{assumption}[PAC-learnability]
We restrict our attention to $\varepsilon$-proper RMDPs. 
We further require that all policies
have a falling expected stack height.
Namely, we require for all $\Mm'$ $\varepsilon$-close to $\Mm$ and all policies $\sigma$ that the expected stack height in step $k$ is bounded by some function $c_o - \mu \cdot \sum_{i=1}^k p_{\mathsf{run}}^{\Mm_\sigma'}(k)$, where $c_o \geq 1$ is an offset, $\mu \in ]0,1]$ is the decline per step, and $p_{\mathsf{run}}^{\Mm_\sigma'}(k)$ is the likelihood that the RMDP $\Mm'$ with strategy $\sigma$ is still running after $k$ steps.
We finally require that the absolute expected value from every strategy is bounded: $\bigl|\ETotal^{\Mm'}_\sigma\bigl((\langle\emptyset\rangle,q)\bigr)\bigr|\leq b$ for some $b$.
\end{assumption}

\begin{restatable}{theorem}{totalPAC}
\label{theo:totalPAC}
For every $\varepsilon$-proper RMDP with parameters $c_o$, $\mu$, and $b$, $\ETotal^\Mm(s)$ is PAC-learnable.
\end{restatable}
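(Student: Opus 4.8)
The plan is to prove PAC-learnability by a model-based argument in three stages: estimate the finite transition function, show that the optimal total-reward value is stable under small transition perturbations, and approximately evaluate the learned model despite the undecidability of exact evaluation (Theorem~\ref{thm:totalundec}). The crucial observation is that although $\sem{\Mm}$ is an infinite-state MDP, the transition function $\delta\colon Q\times A\to\DIST(Q)$ lives on the \emph{finite} vertex set $Q$ and the stack operations are deterministic bookkeeping; hence the only statistical object to learn is the finite collection $\{\delta(q,a)\}$. For each pair $(q,a)$ I would draw enough samples of the successor---via exploration or a generative model, which is a finite reachability matter since $Q$ and $A$ are finite and unreachable pairs are irrelevant---and form the empirical distribution $\hat\delta(q,a)$. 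An $\ell_1$ concentration bound for distributions on a finite support, combined with a union bound over the finitely many pairs, shows that $\mathrm{poly}(|Q|,|A|,1/\varepsilon',\log(1/\delta_{\mathrm{conf}}))$ samples yield $\sum_r|\hat\delta(q,a)(r)-\delta(q,a)(r)|\le\varepsilon'$ for all $(q,a)$ with confidence $1-\delta_{\mathrm{conf}}$. Since $\mathrm{supp}\,\hat\delta(q,a)\subseteq\mathrm{supp}\,\delta(q,a)$ by construction, for $\varepsilon'\le\varepsilon$ the learned model $\hat\Mm$ is $\varepsilon$-close to $\Mm$ and therefore inherits the uniform termination bound and the value bound $b$ from the PAC-learnability assumption.

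Second, and this is the technical heart, I would establish a simulation lemma for RMDPs: $|\ETotal^{\hat\Mm}(s)-\ETotal^{\Mm}(s)|\le\sup_\sigma|\ETotal^{\hat\Mm}_\sigma(s)-\ETotal^{\Mm}_\sigma(s)|$, and then bound the right-hand side uniformly over policies $\sigma$. The falling-stack-height drift condition is exactly what makes this tractable: summing the bound on the expected stack height and using $c_o\ge 1$ and $\mu\in(0,1]$ yields a uniform bound $T\le c_o/\mu$ on the expected number of steps to termination, for every policy and every $\varepsilon$-close model. With a uniform expected horizon $T$, per-step reward bounded by $r_{\max}$, and per-step transition error at most $\varepsilon'$ in total variation, a coupling/telescoping argument over trajectories bounds the value gap by a quantity of the form $C\cdot\varepsilon'$, where $C$ depends polynomially on $T$, $r_{\max}$, and $b$. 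Choosing $\varepsilon'$ small enough as a function of $c_o,\mu,b,r_{\max},\varepsilon$ makes this gap at most $\varepsilon/2$.

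Third, I would approximately evaluate $\ETotal^{\hat\Mm}(s)$. Since exact evaluation is undecidable, I truncate the infinite semantics at stack height $H$, redirecting calls that would exceed $H$ to a terminating sink; this produces a \emph{finite} MDP. The same drift condition bounds, via Markov's inequality applied to the stack-height process, the probability of ever reaching height greater than $H$, so that for $H$ large enough the truncation error is at most $\varepsilon/4$ (using that tail values are bounded by $b$). On the finite truncated stochastic-shortest-path instance, value iteration converges, and a finite number of iterations returns an estimate within $\varepsilon/4$. Combining the three error contributions by the triangle inequality gives $|\hat v-\ETotal^{\Mm}(s)|\le\varepsilon$ with probability at least $1-\delta_{\mathrm{conf}}$, which is the desired PAC guarantee.

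The main obstacle I anticipate is the simulation lemma of the second step. Over the infinite-state semantics with unbounded recursion, small transition errors can in principle compound across the random recursion depth, so a naive per-step perturbation argument does not close. The resolution is to first convert the infinite-horizon total-reward problem into one with a \emph{uniform} expected horizon using the drift and $\varepsilon$-proper assumptions, after which standard simulation reasoning applies; the support-containment condition is what guarantees $\hat\Mm$ stays within the $\varepsilon$-proper regime so that this uniform horizon is available. A secondary subtlety is verifying that the height-truncated model remains a proper finite instance on which value iteration is guaranteed to converge.
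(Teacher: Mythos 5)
Your overall skeleton---model-based estimation of the finite transition function, a policy-uniform perturbation bound, then evaluation of the learned model on a finite truncation---matches the paper's plan, and your observation that the drift condition yields a uniform expected horizon $T \leq c_o/\mu$ over all policies and all $\varepsilon$-close models is correct and is precisely what that assumption buys. But both of your analytic steps leave a genuine gap, and it is the same gap: the parameter $b$ bounds $\bigl|\ETotal^{\Mm'}_\sigma\bigl((\langle\emptyset\rangle,q)\bigr)\bigr|$ only for \emph{empty-stack} starts. From a state with stack height $h$, the remaining expected reward decomposes into roughly $h$ per-level continuations and is only bounded by about $h\cdot b$, not $b$. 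Consequently your ``standard simulation reasoning'' does not apply after establishing the expected horizon: when coupled trajectories under $\Mm$ and $\hat\Mm$ first deviate, the value swing at the deviation point scales with the stack height there, and neither an almost-surely bounded horizon nor a uniform bound on state values is available, so the telescoping constant $C$ you posit is not justified. The paper's missing idea at this point is the \emph{exit-lane} construction: augment every component with a reward-free escape exit, place reward $+b$ (respectively $-b$) on each exit-lane transition, and divert probability $\varepsilon$ per step onto the lane, producing models $M^+$ and $M^-$ that sandwich every $\varepsilon$-close $M'$ uniformly in $\sigma$; the gap $\ETotal^{M^+}_\sigma - \ETotal^{M^-}_\sigma$ is then exactly an expected charge of $2b$ per stack level at the random diversion time, and the drift condition is used to make the expected stack height late in the run small, which is what drives this gap to $O(\varepsilon')$.

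Second, your stack-height truncation at height $H$ fails quantitatively. Markov's inequality (properly applied to the run length, since reaching height $H$ requires at least $H$ steps) gives $P(\text{reach } H) \leq c_o/(\mu H)$, but the value misattributed on that event is of order $H b$ by the per-level accounting above, so the truncation error is $O(c_o b/\mu)$---a constant that does not vanish as $H \to \infty$. The paper instead truncates in \emph{time}: it cuts all runs after $k \approx \lceil c_o b/(\mu\varepsilon')\rceil$ steps, bounds the cut's contribution by $b$ times the expected stack height at step $k$ (which the drift condition pushes below $\varepsilon'/b$), and unravels the depth-$k$ semantics into a finite MDP that is then learned and evaluated by standard techniques. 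Replacing your height cutoff with this time cutoff, together with the exit-lane sandwich, is the repair; your estimation step itself (the $\ell_1$ concentration bound with a union bound over the finitely many $(q,a)$ pairs, and the support-containment observation) is sound and agrees with the paper.
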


These parameters can be replaced by discounting.
Indeed, our proofs start with discounted rewards, and then relax the assumptions to allow for using undiscounted rewards. Using a discount factor $\lambda$ translates to parameters $b = \frac{d}{1-\lambda}$, $c_o = 1 + \frac{1}{1-\lambda}$, and $\mu = 1 {-} \lambda$.

\section{Recursive Q-Learning for Multi-Exit RMDPs}
\label{sec:deepMulti}

While RMDPs with multiple exits come with undecidability results, they are the interesting cases as they represent systems with an arbitrary call stack. 
We suggest an abstraction that turns them into a fixed size data structure, which is well suited for neural networks.

Given a proper recursive MDP $M = (M_1, \ldots, M_k)$ with 
$M_i = (A_i, N_i , B_i, Y_i, \En_i, \Ex_i, \delta_i,r_i)$ with semantics $\sem{M} = (A_M, S_M, T_M, r_M)$, the optimal total expected reward can be captured by the following equations $\textrm{OPT}_{\tt recur}(M)$. For every $\kappa \in B^*$ and $q \in Q$:
\begin{eqnarray*}
y(\sseq{\kappa}, q) &=& 
\begin{cases}
    y(\sseq{\kappa, b}, en) & q {=} (b, en) \in \Call\\
    0 & q \in \Ex, \kappa = \sseq{\emptyset} \\
   y(\sseq{\kappa'}, (b, q)) & q \in \Ex, (b, q) \in \return(b), \kappa {=} \sseq{\kappa', b}\\
  \max\limits_{a \in A(q)} \Bigl\{ r(q, a) {+} \sum\limits_{q' \in Q}  p(q' | q, a) y(\sseq{\kappa}, q')\Bigr\} & \text{otherwise.}
\end{cases}
\end{eqnarray*}

These equations capture the optimality equations on the underlying infinite MDP $\sem{M}$. It is straightforward to see that, if these equations admit a solution, then the solution equals the optimal total expected reward~\cite{Put94}.
Moreover, an optimal policy can be computed by choosing the actions that maximize the right-hand-side. 
However, since the state space is countably infinite and has an intricate structure, an algorithm to compute a solution to these equations is not immediate.

To make it accessible to learning, we \emph{abstract} the call stack $\sseq{\kappa, b}$ to its \emph{exit value}, i.e. the total expected reward from the exit nodes of the box $b$, under the stack context $\sseq{\kappa}$.
Note that when a box is called, the value of each of its exits may still be unknown, but it is (for a given strategy) fixed.
Naturally, if two stack contexts $\sseq{\kappa, b}$  and $\sseq{\kappa', b}$ achieve the same expected total reward from each exit of the block $b$, then both the optimal strategy and the expected total reward, are the same.
 
This simple but precise and effective abstraction of stacks with exit values allows us to consider the following optimality equations $\textrm{OPT}_{\tt cont}(M)$. For every $1 \leq i \leq k$, $q \in Q_i$, ${\bf v} \in \Real^{|\Ex_i|}$:
\begin{eqnarray*}
x({\bf v}, q) &=& 
\begin{cases}
    x({\bf v'}, en)[{\bf v}' \mapsto (x({\bf v},q'))_{q'\in \return_b}] & q = (b, en) \in \Call\\
    {\bf v}(q) & q \in \Ex \\
\max\limits_{a \in A(q)} \Bigl\{ r(q, a) {+} \sum\limits_{q' \in Q}  p(q' | q, a) x({\bf v}, q')\Bigr\} & \text{otherwise.}
\end{cases}
\end{eqnarray*}
Here ${\bf v}$ is a vector where ${\bf v}(ex)$ is the 
(expected) reward that we get once we reach exit $ex$ of the current component. 
Informally when a box is called, this vector is being updated with the current estimates of the reward that we get once the box is exited. The $ex$ entry of this vector ${\bf v'} = (x({\bf v}, q'))_{q'\in \return_b}$ is $x({\bf v}, (b,ex))$, which is the value that we achieve from exit $(b,ex)$.

This continuous-space abstraction of the countably infinite space of the stack contexts enables the application deep feedforward neural networks~\cite{goodfellow2016deep} with a finite state encoding in RL.
It also provides an elegant connection to the smoothness of differences to exit values: if all exit costs are changed by less than $\varepsilon$, then the cost of each state within a box changes by less than $\varepsilon$, too.
The following theorem connects both versions of optimality equations.
\begin{restatable}[Fixed Point]{theorem}{fixedpc}
\label{thm:fix-point-correspondence}
If $y$ is a fixed point of  $\textrm{OPT}_{\tt recur}$ and $x$ is a fixed point of $\textrm{OPT}_{\tt cont}$, then $y(\sseq{\emptyset},q) = x({\bf 0},q)$. 
Moreover, any policy optimal from $({\bf 0},q)$ is also optimal from $(\sseq{\emptyset},q)$.
\end{restatable}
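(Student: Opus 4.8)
The plan is to prove a stronger, stack-parametric statement from which both claims follow by specializing to the empty stack. For a stack context $\sseq{\kappa}$ and a vertex $q \in Q_i$, write $\mathbf v_\kappa \in \Real^{|\Ex_i|}$ for the vector of exit values induced by $\kappa$, defined \emph{through $x$} by induction on stack height: set $\mathbf v_{\sseq{\emptyset}} = \mathbf 0$, and for $\kappa = \sseq{\kappa', b}$ with $b \in B_i$ put $\mathbf v_\kappa(ex) = x(\mathbf v_{\kappa'}, (b, ex))$ for each $ex \in \Ex_i$. This recursion is well-founded because $\mathbf v_\kappa$ refers only to $\mathbf v_{\kappa'}$ with $|\kappa'| = |\kappa| - 1$. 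I would then set $\hat y(\sseq{\kappa}, q) := x(\mathbf v_\kappa, q)$ and show that $\hat y$ is a fixed point of $\textrm{OPT}_{\tt recur}$; the key point is that building $\mathbf v_\kappa$ out of $x$ (rather than out of $y$) is precisely what makes the call-port case close up without circularity.

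The verification proceeds branch by branch against the four cases of $\textrm{OPT}_{\tt recur}$. For an internal node (the ``otherwise'' branch) the transitions stay inside the current component, so $\mathbf v_\kappa$ is unchanged across all successors and the max-over-actions branch of $\textrm{OPT}_{\tt cont}$ at $(\mathbf v_\kappa, q)$ coincides termwise with that of $\textrm{OPT}_{\tt recur}$ at $(\sseq{\kappa}, q)$; this case also covers return ports. For an exit node with empty stack, $\hat y(\sseq{\emptyset}, q) = x(\mathbf 0, q) = \mathbf 0(q) = 0$ by the exit branch of $\textrm{OPT}_{\tt cont}$. For an exit node with $\kappa = \sseq{\kappa', b}$, the exit branch gives $\hat y(\sseq{\kappa}, q) = \mathbf v_\kappa(q) = x(\mathbf v_{\kappa'}, (b,q)) = \hat y(\sseq{\kappa'}, (b, q))$, matching the return branch. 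Finally, for a call port $q = (b, en)$, the call branch of $\textrm{OPT}_{\tt cont}$ rewrites $x(\mathbf v_\kappa, (b, en))$ as $x(\mathbf v', en)$ with $\mathbf v'(ex) = x(\mathbf v_\kappa, (b, ex))$; but this is exactly $\mathbf v_{\sseq{\kappa, b}}$ by definition, so $\hat y(\sseq{\kappa}, (b,en)) = x(\mathbf v_{\sseq{\kappa,b}}, en) = \hat y(\sseq{\kappa, b}, en)$, matching the call branch. Hence $\hat y$ satisfies $\textrm{OPT}_{\tt recur}$.

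To conclude $y = \hat y$ I would invoke uniqueness of the fixed point of $\textrm{OPT}_{\tt recur}$. This is where Assumption~\ref{asm:proper} is essential: properness makes $\sem{M}$ a stochastic-shortest-path problem in which every policy terminates in finite expected time, so the Bellman optimality operator has a unique fixed point equal to $\ETotal^{M}$. Specializing $y = \hat y$ to the empty stack yields $y(\sseq{\emptyset}, q) = \hat y(\sseq{\emptyset}, q) = x(\mathbf v_{\sseq{\emptyset}}, q) = x(\mathbf 0, q)$, which is the first claim.

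For the policy statement I would transport a policy $\pi$ of the continuous equations to the policy $\bar\pi$ on $\sem{M}$ given by $\bar\pi(\sseq{\kappa}, q) = \pi(\mathbf v_\kappa, q)$, using the same vectors $\mathbf v_\kappa$. Starting from $(\sseq{\emptyset}, q)$, the context vectors evolve exactly as the abstract transitions prescribe (pushing $b$ sends $\mathbf v_\kappa$ to $\mathbf v_{\sseq{\kappa,b}}$, and popping inverts this), so the states reachable under $\bar\pi$ from $(\sseq{\emptyset}, q)$ correspond to the states reachable under $\pi$ from $(\mathbf 0, q)$. Since $x(\mathbf v_\kappa, \cdot) = y(\sseq{\kappa}, \cdot)$ by the first part, the action values inside the max-branches of the two systems agree at corresponding states, so their sets of maximizing actions coincide; a policy selecting optimal (argmax) actions from $(\mathbf 0, q)$ therefore selects optimal actions from $(\sseq{\emptyset}, q)$, and by properness greedily following the optimality equation attains the optimal value. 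I expect the main obstacle to be exactly this bookkeeping that avoids circularity---defining $\mathbf v_\kappa$ via $x$ and inducting on stack height so that the call branch matches definitionally---together with the careful appeal to properness for uniqueness of the $\textrm{OPT}_{\tt recur}$ fixed point; the individual case checks are then routine.
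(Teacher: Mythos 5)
Your proposal is correct and takes essentially the same route as the paper: your stack-indexed vectors $\mathbf{v}_\kappa$ and the induced $\hat y(\langle\kappa\rangle,q)=x(\mathbf{v}_\kappa,q)$ are exactly the paper's ``recursive parallel fixed point reconstruction'' of a fixed point $y'$ of $\textrm{OPT}_{\tt recur}$ from $x$, and both arguments then conclude via uniqueness of the $\textrm{OPT}_{\tt recur}$ fixed point, which equals $\ETotal^M$ under Assumption~\ref{asm:proper}. The only difference is presentational: you make the well-founded induction on stack height explicit, whereas the paper phrases the same construction as a recursive assignment process and spells out in more detail why properness bounds the expected termination time (by $K\cdot|\kappa|$ via the stack-suffix isomorphism) to justify uniqueness.
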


\begin{algorithm}[t]
\caption{Recursive Q-learning}\label{alg:rql}
Initialize $Q(s,v,a)$ arbitrarily

\While{not converged}{
    $v \gets \mathbf{0}$

    $\text{stack} \gets \emptyset$

    Sample trajectory $\tau \sim \{(s,a,r,s'), ...\}$
    
    \For{$s,a,r,s'$ in $\tau$}{
    
    Update $\alpha_i$ according to learning rate schedule
    
    \uIf{entered box}{
    $\{s_{\text{exit}_1}, \ldots, s_{\text{exit}_n}\} \gets \text{getExits}(s')$ 
    
    $v' \gets [\max_{a' \in A(s_{\text{exit}_1})} Q(s_{\text{exit}_1}, v, a'), \ldots, \max_{a' \in A(s_{\text{exit}_n})} Q(s_{\text{exit}_n}, v, a')]$
    
    $v'_{\min{}} \gets \min(v')$
    
    $v' \gets v' - v'_{\min{}}$
    
    $Q(s,v,a) \gets (1-\alpha_i) Q(s,v,a) + \alpha_i(r + \max_{a' \in A(S')} Q(s', v', a') + v'_{\min{}})$
    
    stack.push($v$)
    
    $v \gets v'$
    }
    \uElseIf{exited box}{
    $\{s_{\text{exit}_1}, \ldots, s_{\text{exit}_n}\} \gets \text{getExits}(s)$
    
    Set $k$ such that $s' = s_{\text{exit}_k}$
    
    $Q(s,v,a) \gets (1-\alpha_i) Q(s,v,a) + \alpha_i(r + v(k))$
    
    $v \gets$ stack.pop()
    }
    \Else{
    $Q(s,v,a) \gets (1-\alpha_i) Q(s,v,a) + \alpha_i(r +  \max_{a \in A(S')} Q(s', v, a'))$
    }
    }
}

\Return $Q$
\end{algorithm}

We design a generalization of the Q-learning algorithm~\cite{watkins1992q} for recursive MDPs based on the optimality equations $\textrm{OPT}_{\tt cont}$ shown in Algorithm~\ref{alg:rql}.
We implement several optimizations in our algorithm. 
We assume implicit transitions from the entry and exit ports of the box to the corresponding entry and exit nodes of the components.
A further optimization is achieved by applying a dimension reduction on the representation of the exit value vector ${\bf v}$ by normalizing these values in such a way that one of the exits has value $0$.
This normalization does not affect optimal strategies as, when two stacks incur similar costs in that they have the same offset between the cost of each exit, the optimal strategy is still the same, with the difference in cost being this offset.

While the convergence of Algorithm~\ref{alg:rql} is not guaranteed for the general multi-exit RMDPs, the algorithm converges for the special cases of deterministic proper RMDPs and $1$-exit RMDPs (Section~\ref{sec:deepSingle}).
For the deterministic multi-exit case, the observation is straightforward as the properness assumption reduces the semantics to be directed acyclic graph, and the correct values are eventually propagated from the leafs backwards.\begin{theorem}
Tabular Recursive Q-learning converges to the optimal values for deterministic proper multi-exit RMDPs with a learning rate of $1$ when all state-action pairs are visited infinitely often.
\end{theorem}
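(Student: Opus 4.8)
The plan is to exploit the two structural consequences of working with deterministic proper RMDPs: determinism collapses each expectation $\sum_{q'} p(q'\mid q,a)\,x(\mathbf{v},q')$ in $\textrm{OPT}_{\tt cont}$ to a single successor term, and properness forbids nonterminating plays. First I would show that these two facts force the reachable portion of the semantics to be a finite acyclic graph. Since each component has finitely many actions, from any configuration $(\kappa,q)$ there are only finitely many successors, so the tree of plays is finitely branching; if play lengths were unbounded, K\"onig's lemma would produce an infinite play, i.e.\ a strategy that never terminates, contradicting properness. Hence there is a uniform bound $H_{\max}$ on the number of steps to termination over all strategies, only finitely many configurations are reachable, and the configuration graph is a finite DAG whose unique sink is termination. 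This lets me define a rank $\rho(\kappa,q)$ as the length of the longest path to termination, which is well founded by acyclicity.

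I would then prove, by induction on $\rho$, that the tabular entry indexed by the \emph{correct} exit-value vector eventually equals, and thereafter stays equal to, the optimal value $x(\mathbf{v},q)$. The base case is a configuration one step from termination: a single deterministic transition into the terminal sink (value $0$) or onto an exit whose value is read directly off $\mathbf{v}$, so one visit with learning rate $1$ installs the exact value. For the inductive step, every successor of a rank-$n$ state has rank strictly less than $n$ and is correct by hypothesis; because the learning rate is $1$, the next visit overwrites $Q(s,\mathbf{v},a)$ with $r + (\text{correct successor value})$, which is precisely the right-hand side of $\textrm{OPT}_{\tt cont}$ and hence optimal. The crucial point that makes this induction go through is the ordering it imposes on the two phases of a box call: the return ports of a box---the continuation that is evaluated \emph{after} the box returns---sit strictly closer to termination than any vertex visited \emph{inside} the box, so they have smaller rank and converge first.

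The main obstacle, and the step I would spend the most care on, is that the table is indexed by the continuous exit-value vector $\mathbf{v}$, which is itself computed from the current (possibly wrong) estimates when a box is entered (the line $v' \gets [\max_{a'} Q(s_{\text{exit}_i},v,a')]_i$, together with the normalization offset $v'_{\min}$). Thus before the continuation values have stabilized, boxes are entered with inexact vectors and the updates land on transient cells that are irrelevant to the limit. I would argue that, by the rank induction above, all return-port (continuation) values stabilize after finitely many iterations; from that point on every box is entered with the exact optimal vector $\mathbf{v}^*$, the normalization offset is exact, and the only cells visited infinitely often are those indexed by the correct $\mathbf{v}^*$. These are exactly the cells the induction finalizes, and the separate handling of the offset is routine once the vectors are exact.

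Finally I would assemble the pieces: since ranks are bounded by $H_{\max}$, the induction terminates after finitely many stabilization phases, so every reachable vertex with its correct exit vector carries the optimal value; correctness is preserved under further updates because each target depends only on already-correct strictly-lower-rank cells and is deterministic. Invoking Theorem~\ref{thm:fix-point-correspondence} to identify the fixed point of $\textrm{OPT}_{\tt cont}$ from $(\mathbf{0},q)$ with the optimal total expected reward then yields convergence of Recursive Q-learning to the optimal values.
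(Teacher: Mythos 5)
Your proposal is correct and takes essentially the same route as the paper, which justifies this theorem only with a one-sentence remark that determinism plus properness makes the reachable semantics a finite directed acyclic graph, so that with learning rate $1$ the correct values propagate backwards from the leaves. Your elaboration---the K\"onig's-lemma argument for the uniform termination bound, the induction on distance-to-termination, and in particular the stabilization argument for the exit-value vectors that index the Q-table (a subtlety the paper glosses over entirely)---is sound.
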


\section{Convergence of Recursive Q-Learning for Proper 1-exit RMDPs}
\label{sec:deepSingle}
Recall that a proper $1$-exit RMDP is a proper RMDP where, for each component $M_i$, the set of exits $\Ex_i$ is a singleton.
For this special case, we show that the recursive Q-learning algorithm converges to the optimal strategy.
The optimality equations $\textrm{OPT}_{\tt cont}(M)$ (similar to~\cite{EtessamiWY19}) can be simplified in the case of $1$-exit RMDPs whose unique fixed point solution will give us the optimal values of the total reward objective.
For every $q \in Q$:
\begin{eqnarray*}
x(q) &=& 
\begin{cases}
    x(en) + x (b,ex')
    & q = (b, en) \in \Call, ex = \Ex_{Y(b)}\\
\max_{a\in A(q)} \Bigl\{r(q,a) + \sum\limits_{q' \in Q} p(q'|q, a) x(q') \Bigr\}
& \text{otherwise.}
\end{cases}
\end{eqnarray*}

We now denote the system of all these equations in a vector form as $\bfx = F(\bfx)$.  
Given a 1-exit RMDP, we can easily construct its associated equation system above in linear time.

\begin{restatable}[Unique Fixed Point]{theorem}{uniqfp}
\label{thm:unique-fixed-point}
The vector consisting of  $\ETotal^M(q)$ values is the unique fixed point of $F$.
Moreover, a solution of these equations provide optimal stackless strategies.
\end{restatable}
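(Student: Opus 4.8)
The plan is to establish two directions. First, I would show that the vector of optimal total-reward values $\left(\ETotal^M(q)\right)_{q \in Q}$ is \emph{a} fixed point of $F$, and second, that $F$ has at most one fixed point. For the first direction, I would appeal to the general optimality equations $\textrm{OPT}_{\tt recur}$ on the underlying infinite MDP $\sem{M}$, which (as noted in the excerpt, citing \cite{Put94}) are satisfied by the optimal values. The key simplification in the $1$-exit case is that each component has a single exit, so the "exit value" abstraction collapses: the continuous parameter ${\bf v} \in \Real^{|\Ex_i|}$ becomes a single scalar, and one can verify that the call-port equation $x((b,en)) = x(en) + x((b,ex'))$ correctly records that entering box $b$ contributes the reward accumulated inside the called component (captured by $x(en)$, the value from the entry under zero exit offset) plus the value $x((b,ex'))$ obtained upon returning. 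Concretely I would invoke Theorem~\ref{thm:fix-point-correspondence} to transfer the fixed point of $\textrm{OPT}_{\tt recur}$ to a fixed point of the continuous equations, then specialize the continuous equations to the single-exit form, checking that the additive decomposition at call ports is exactly what properness and the Bellman structure give.

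For uniqueness, the natural approach is a monotonicity-plus-contraction argument on the operator $F$. I would first observe that $F$ is monotone in the pointwise order (each case of the definition is monotone in $\bfx$: the $\max$ over actions, the nonnegative convex combinations via $p(q'|q,a)$, and the additive call-port equation all preserve order). Monotonicity alone does not give uniqueness because the call-port equation $x((b,en)) = x(en) + x((b,ex'))$ is not a contraction in the naive sup-norm. The leverage comes from the \emph{proper} assumption (Assumption~\ref{asm:proper}): because every strategy terminates in finite expected time, the expected number of pending calls is controlled, and one can set up a weighted or discounted norm in which the combined return operator becomes a contraction. I would make this precise by arguing that any fixed point of $F$ must equal the expected total reward of the strategy that plays greedily with respect to it, using properness to guarantee that the infinite unfolding of the recursion converges and that the "tail" contribution of arbitrarily deep call stacks vanishes. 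Two fixed points would each have to equal this same expectation, forcing them to coincide.

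The statement about \emph{stackless} strategies then follows from the structure of the simplified equations: since the right-hand side at each vertex $q$ depends only on $q$ (the exit-value offset has been eliminated by the additive form), the maximizing action $a \in A(q)$ can be chosen as a function of $q$ alone, independent of the stack context. I would argue that the deterministic positional strategy $\sigma^*(q) \in \argmax_{a \in A(q)}\{r(q,a) + \sum_{q'} p(q'|q,a)\,x(q')\}$ achieves the fixed-point value from every vertex, again using properness to justify that the expected total reward of $\sigma^*$ equals the value of the fixed point (a standard verification-theorem argument: one fixed point equation shows $x \le \ETotal^M_{\sigma^*}$ and optimality of $F$ shows $x = \ETotal^M \ge \ETotal^M_{\sigma^*}$, squeezing to equality).

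The main obstacle I anticipate is the uniqueness argument, specifically handling the non-contractive additive term $x(en) + x((b,ex'))$ at call ports. Unlike a finite-state discounted MDP where $F$ is a sup-norm contraction with factor $\lambda < 1$, here the recursion can stack values additively, so a direct Banach fixed-point argument fails. I expect the crucial step is to exploit properness quantitatively: translate "finite expected number of steps to termination" into a bound showing that the influence of deep recursion on any vertex's value decays, so that the fixed point is pinned down by the (convergent) expected-reward semantics rather than by an abstract contraction. Establishing that two arbitrary fixed points both coincide with the probabilistic expected-reward value—rather than merely that the \emph{value function} is some fixed point—is where the real work lies.
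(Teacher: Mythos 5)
Your overall architecture (show $\ETotal^M$ is a fixed point, pin every fixed point to expected-reward semantics via its greedy stackless strategy, extract a positional optimal strategy by a verification squeeze) matches the paper's proof in spirit, and your first and third paragraphs are essentially sound. The genuine gap is in the decisive uniqueness step: you argue that ``any fixed point of $F$ must equal the expected total reward of the strategy that plays greedily with respect to it,'' and then conclude that ``two fixed points would each have to equal this same expectation, forcing them to coincide.'' The first half is correct and is exactly the paper's Lemma~\ref{lem:FP-stackless} (which rests on the per-strategy analysis below), but the conclusion is a non sequitur: a fixed point $\bfx$ equals $\ETotal^M_{\sigma(\bfx)}$ for \emph{its own} greedy strategy $\sigma(\bfx)$, and two distinct fixed points induce different greedy strategies, so they equal expectations under \emph{different} strategies. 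What this argument actually yields is only $\bfx = \ETotal^M_{\sigma(\bfx)} \leq \sup_\sigma \ETotal^M_\sigma = \ETotal^M$, i.e., that $\ETotal^M$ is the \emph{largest} fixed point; it does not exclude strictly smaller ones, which is precisely the failure mode of undiscounted total-reward Bellman equations.

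The paper closes this with machinery you correctly identified as ``the real work'' but did not execute. For a fixed stackless $\sigma$, the operator is affine, $F_\sigma(\bfx) = A_\sigma \bfx + \bfb_\sigma$ with $A_\sigma \geq 0$, and properness forces the expected termination times $\sum_{i} A^i_\sigma \mathbf{1}$ to be finite, hence $A^k_\sigma \to 0$; consequently $\lim_{k\to\infty} F^k_\sigma(\bfx)$ exists, is \emph{independent of the starting point} $\bfx$, and equals $\ETotal^M_\sigma$ (Lemma~\ref{lem:proper-1}). The missing cross-comparison then goes: for any fixed point $\bfx$ and $\sigma^*$ greedy with respect to $\bfx^* = \ETotal^M$, one has $F_{\sigma^*}(\bfx) \leq F(\bfx) = \bfx$, so by monotonicity $F^k_{\sigma^*}(\bfx) \leq \bfx$ for all $k$, whence $\bfx^* = \lim_{k\to\infty} F^k_{\sigma^*}(\bfx) \leq \bfx$; combined with maximality this gives $\bfx = \bfx^*$. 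Note that no contraction in any norm is ever established or needed. Your alternative sketch---a weighted norm in which $F$ contracts---is plausible but also not carried out: the natural weights are maximal expected termination times over stackless strategies, and their finiteness and the required slack do not follow immediately from Assumption~\ref{asm:proper} (which only makes each individual strategy proper); making that precise would require essentially the same per-strategy affine analysis anyway.
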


Note that for the $1$-exit setting, Algorithm~\ref{alg:rql} simplifies to Algorithm~\ref{alg:rql1e} since $v$ is always $0$ and $v_{\min{}}$ is always the maximum Q-value for the exit. 
The convergence of the recursive Q-learning algorithm for $1$-exit RMDPs follows from Theorem \ref{thm:unique-fixed-point} and stochastic approximation~\cite{watkins1992q,borkar2000ode}.

\begin{theorem}
\label{thm:1exit-qlearn}
Algorithm~\ref{alg:rql1e} converges to the optimal values in $1$-exit RMDP when the learning rates satisfy $\sum_{i=0}^\infty \alpha_i = \infty$, $\sum_{i=0}^\infty \alpha^2_i < \infty$, and all state-action pairs are visited infinitely often.
\end{theorem}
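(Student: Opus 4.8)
The plan is to recognize Algorithm~\ref{alg:rql1e} as an instance of \emph{asynchronous stochastic approximation} and invoke the standard convergence machinery for Q-learning~\cite{watkins1992q,borkar2000ode}, with Theorem~\ref{thm:unique-fixed-point} supplying the limit. First I would rewrite the update at a visited pair $(q,a)$ in the canonical form $Q_{t+1}(q,a) = (1-\alpha_t)Q_t(q,a) + \alpha_t\bigl(H(Q_t)(q,a) + w_t\bigr)$, where $H$ is the Bellman-type operator whose coordinates mirror the simplified $1$-exit system $\bfx = F(\bfx)$: at ordinary vertices $H$ applies the usual ``reward plus max over next $Q$'', and at a call port $(b,en)\in\Call$ it returns $x(en) + x(b,ex')$, reflecting that the single exit value is simply added back after the call (which is exactly why, in the $1$-exit specialization of Algorithm~\ref{alg:rql}, the vector $v$ collapses to $\mathbf 0$). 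The residual $w_t$ is the difference between the sampled successor term and its conditional expectation; since successors are drawn from $\delta$, the sequence $w_t$ is a martingale difference.

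Next I would pin down the fixed point of $H$. By Theorem~\ref{thm:unique-fixed-point}, $F$ has a unique fixed point equal to $\bigl(\ETotal^M(q)\bigr)_q$, and its greedy choice is an optimal stackless strategy; hence $H$ inherits a unique fixed point that is precisely the intended limit. The substantive work is to show that the deterministic iteration $Q \mapsto H(Q)$ drives every initial vector to this fixed point, i.e.\ that $H$ is a contraction (or pseudo-contraction) in a suitable norm. Because the total-reward objective carries \emph{no discount factor}, $H$ is in general not a sup-norm contraction, so I would instead exploit Assumption~\ref{asm:proper}: properness guarantees that from every vertex the expected number of steps to termination is finite and uniformly bounded over all strategies, which --- exactly as in the stochastic-shortest-path analysis of~\cite{bertsekas1991analysis} --- yields a weighted max-norm $\|\cdot\|_\xi$ under which some finite power $H^m$ is a contraction. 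The weights $\xi(q)$ would be built from the expected hitting times, so that the propagation of value across box calls, including the additive entry-plus-exit term, is strictly attenuated.

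With the weighted-norm contraction in hand, I would verify the remaining hypotheses of the asynchronous stochastic-approximation theorem~\cite{borkar2000ode}: the step sizes obey the Robbins--Monro conditions $\sum_t \alpha_t = \infty$ and $\sum_t \alpha_t^2 < \infty$; every state-action pair is updated infinitely often (the infinite-visitation hypothesis); and the noise $w_t$ has conditional second moment bounded affinely in $\|Q_t\|$, which follows from the bounded one-step reward $r_{max}$ together with the properness-induced bound on the values. These conditions let the ODE method conclude that $Q_t$ tracks the flow $\dot Q = H(Q) - Q$, whose unique globally asymptotically stable equilibrium is the fixed point of $H$; hence $Q_t \to \bigl(\ETotal^M(q)\bigr)_q$ almost surely.

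I expect the main obstacle to be the contraction step. The additive coupling at call ports, $x(en)+x(b,ex')$, means a naive sup-norm estimate does not contract, and the undiscounted setting removes the usual safety margin; the argument must therefore construct the weighted norm carefully from the properness bounds and confirm that the box-call recursion --- which can stack value contributions to unbounded depth --- is nonetheless a weighted pseudo-contraction. Once this structural estimate is secured, feeding it into the standard stochastic-approximation theorem is routine.
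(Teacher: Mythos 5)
Your proposal is correct and, at the top level, is exactly the paper's route: the paper's entire argument for Theorem~\ref{thm:1exit-qlearn} is the single sentence that convergence follows from Theorem~\ref{thm:unique-fixed-point} and stochastic approximation~\cite{watkins1992q,borkar2000ode}. What you do differently is to treat the second half of that sentence as a genuine proof obligation and discharge it with the Bertsekas--Tsitsiklis weighted max-norm machinery, correctly observing that uniqueness of the fixed point of $F$ is not by itself enough for the ODE argument in an undiscounted problem. The paper never constructs such a norm; its appendix instead proves uniqueness by monotonicity over stackless strategies together with the fact that, for each fixed stackless $\sigma$, the summarized linear part satisfies $A^k_\sigma \to 0$ (so $\sum_i A^i_\sigma b_\sigma$ converges) --- which is precisely the per-policy ingredient your norm needs. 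To settle the step you flag as the main obstacle: take $\xi(q)$ to be the maximal expected termination time from $q$ (finite and attained by a stackless strategy, by the same \cite{EtessamiWY19}-style computation the paper uses in Lemma~\ref{lem:proper-1}); then $\xi \geq A_\sigma \xi + e$ for every stackless $\sigma$, where $e$ is the indicator of the vertices at which a time step is accrued (everything except call ports and terminal exits). A one-step contraction indeed fails at call ports, exactly because of the additive coupling $x(en) + x(b,ex')$ you identify; but a call port moves deterministically to an entry node, which by definition is never itself a call port, so $e + A_\sigma e \geq \mathbf{1}$ on all non-terminal coordinates, whence $A_\sigma A_{\sigma'} \xi \leq \xi - \mathbf{1} \leq (1 - 1/\|\xi\|_\infty)\,\xi$ uniformly over pairs of stackless strategies. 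Thus already the square of your operator $H$ is a $\xi$-weighted sup-norm contraction (terminal coordinates being constant and harmless), and the asynchronous stochastic-approximation theorem then applies as you describe, with Theorem~\ref{thm:unique-fixed-point} identifying the limit as $\bigl(\ETotal^M(q)\bigr)_q$. One caveat worth recording: Assumption~\ref{asm:proper} gives finite expected termination time per strategy, while both your argument and the paper's own usage (its uniform constant $K$ in the appendix) need a bound uniform over strategies; this follows from stackless sufficiency for the maximal-expected-time problem rather than directly from the assumption, and your write-up should state that reduction explicitly.
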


In order to show efficient PAC learnability for $\epsilon$-proper 1-exit RMDP $M$, it suffices to know an upper bound on the expected number of steps taken by $M$ when starting at any vertex with the empty stack content, which will be denoted by $K$. 

\begin{restatable}[Efficient PAC Learning for 1-Exit RMDPs]{theorem}{oneexitPAC}
\label{thm:1exit-PAC}
For every $\epsilon$-proper $1$-exit RMDP with diameter $r_{max}$ and the expected time to terminate $\leq K$, $\ETotal^\Mm(s)$ is efficiently PAC-learnable.
\end{restatable}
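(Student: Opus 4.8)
The plan is to establish efficient PAC-learnability of $\ETotal^\Mm(s)$ for $\varepsilon$-proper $1$-exit RMDPs by combining three ingredients: (i) the model-based sampling scheme already sketched in the general PAC result (Theorem~\ref{theo:totalPAC}), (ii) the simplification available in the $1$-exit setting, namely that the optimal values form the \emph{unique} fixed point of the linear-in-structure operator $F$ from Theorem~\ref{thm:unique-fixed-point}, and (iii) the bound $K$ on the expected number of steps to termination, which replaces the ad-hoc parameters $c_o,\mu,b$ and lets us control both sample complexity and the sensitivity of the fixed point to perturbations of $\delta$. Concretely, I would first draw $m$ i.i.d.\ samples of the next-vertex for each state-action pair $(q,a)$ to form an empirical transition function $\hat\delta$, and argue that for $m$ polynomial in $|Q|$, $|A|$, $r_{max}$, $K$, $1/\epsilon$, and $\log(1/\delta)$, the empirical model $\hat\Mm$ is $\varepsilon'$-close to $\Mm$ with probability at least $1-\delta$ (a standard Hoeffding/union-bound argument over the finitely many state-action pairs).

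The second step is to show that solving the fixed-point system $\bfx = F(\bfx)$ on the \emph{empirical} model yields values close to the true optimal values. Here I would use the $\varepsilon$-properness structure: since every policy terminates with uniformly bounded expected time under any $\varepsilon$-close model, the operator $F$ (and its analogue $\hat F$ built from $\hat\delta$) is a contraction in an appropriate weighted norm, with modulus controlled by $K$. The key quantitative claim is a perturbation bound of the form ``if $\|\delta - \hat\delta\|_1 \leq \varepsilon'$ then $\|\bfx^\ast - \hat\bfx^\ast\|_\infty \leq c(K, r_{max})\cdot \varepsilon'$,'' which follows by writing $\bfx^\ast - \hat\bfx^\ast$ in terms of the difference of the two fixed-point operators and bounding the accumulated error over the expected $\leq K$ steps. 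This is precisely where the smoothness-of-exit-values remark made earlier pays off: in the $1$-exit case the single exit value propagates linearly through calls, so a uniform change in transition probabilities produces a proportionally bounded change in every vertex value.

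The third step is to verify \emph{efficiency}, i.e.\ that both the sample size and the computation of $\hat\bfx^\ast$ are polynomial. The sample size is handled in step one. For computation, I would note that the $1$-exit optimality equations constitute a finite system over $|Q|$ variables whose optimal-value fixed point can be approximated by value iteration on $\hat F$; each iteration is polynomial in the size of the RMDP, and the contraction modulus tied to $K$ guarantees that $O(K\log(r_{max} K/\epsilon))$ iterations suffice to reach accuracy $\epsilon$. Assembling the two error sources — statistical error from $\hat\delta$ and iteration error from finite value iteration — by a triangle inequality and rescaling $\varepsilon'$ and the iteration count gives the final $\epsilon$-accurate, $(1-\delta)$-confident estimate in polynomial time and samples.

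The main obstacle I anticipate is the perturbation bound in the second step. Unlike discounted finite-state MDPs, where a fixed discount factor immediately yields a contraction with a clean Lipschitz constant, here the ``effective discount'' is implicit in the $\varepsilon$-properness assumption and is only expressed through the expected termination time $K$; care is needed to show that $K$ genuinely bounds the contraction modulus \emph{uniformly over all $\varepsilon$-close models} and not merely for the true model. In particular I must confirm that the support-containment condition in the definition of $\varepsilon$-closeness prevents the empirical model from introducing spurious non-terminating behavior that would blow up $K$, and that the recursive call structure does not amplify the per-step error geometrically in the stack height — the falling-expected-stack-height property (or its $1$-exit specialization, where exit values compose additively) is what rules this out.
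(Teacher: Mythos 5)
Your proposal is correct in outline and shares the paper's model-based architecture---sample an empirical transition function $\hat\delta$, prove a perturbation bound of order $K^2 r_{max}\varepsilon'$ for the optimal values, then solve the empirical model---but both technical components are carried out by a genuinely different route. For the perturbation step the paper uses no contraction argument at all: it fixes a stackless policy $\sigma$, observes that the two value vectors are fixed points of the affine maps $\bfx \mapsto A_\sigma \bfx + b_\sigma$ and $\bfx \mapsto A'_\sigma \bfx + b_\sigma$, and computes the difference in closed form as $\bigl(\sum_{i\geq 0} (A'_\sigma)^i\bigr)\mathcal{E}\,\bfx$ with $\mathcal{E} = A'_\sigma - A_\sigma$; combining $\|\bfx\|_\infty \leq K r_{max}$, $\|\mathcal{E}\|_1 \leq 2\varepsilon$, and $\bigl(\sum_i (A'_\sigma)^i\bigr)\mathbf{1} \leq K\mathbf{1}$ yields the bound $2\varepsilon K^2 r_{max}$ of Lemma~\ref{lem:proper-bound}, which then survives the supremum over stackless policies. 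This sidesteps exactly the point you flag as your main obstacle, and rightly so: $F$ is \emph{not} a contraction in the plain sup norm, since at a call port the equation $x(b,en) = x(en) + x(b,ex')$ makes the corresponding row of $A_\sigma$ sum to $2$, so your weighted norm is essential, not optional. Your route can be completed by taking the weight $w(q)$ to be the optimal expected remaining time, which satisfies $A_\sigma w + \mathbf{1} \leq w \leq K\mathbf{1}$ for every $\sigma$ and hence gives modulus $1 - 1/K$ uniformly over all $\varepsilon$-close models (here $\epsilon$-properness plus the support-containment condition do the work you anticipate), but constructing $w$ and proving this Bellman inequality for the time objective is extra machinery the paper never needs. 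On the computational step the paper also diverges: rather than value iteration with $O(K\log(r_{max}K/\epsilon))$ iterations and a third error term to book-keep, it solves the learned model $M'$ \emph{exactly} in polynomial time via the linear program minimizing $\sum_q t_q$ subject to the Bellman inequalities, whose optimum is a fixed point of $F$ and therefore, by the uniqueness established in Theorem~\ref{thm:unique-fixed-point}, equals $\ETotal^{M'}$ precisely. In short, uniqueness of the fixed point plays in the paper the role that contraction plays in your argument: your version buys an iterative scheme closer to RL practice and a reusable contraction framework, while the paper's buys exactness, a two-term error budget, and a shorter proof.
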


\begin{algorithm}[t]
\caption{Recursive Q-learning (1-exit special case)}\label{alg:rql1e}
Initialize $Q(s,a)$ arbitrarily

\While{not converged}{

    Sample trajectory $\tau \sim \{(s,a,r,s'), ...\}$
    
    \For{$s,a,r,s'$ in $\tau$}{
    
    Update $\alpha_i$ according to learning rate schedule
    
    \uIf{entered box}{
    $s_{\text{exit}} \gets \text{getExit}(s')$
    
    $Q(s,a) \gets (1-\alpha_i) Q(s,a) + \alpha_i(r + \max_{a' \in A(S')} Q(s', a') + \max_{a' \in A(s_{\text{exit}})} Q(s_{\text{exit}}, a'))$
    }
    \uElseIf{exited box}{
    $Q(s,a) \gets (1-\alpha_i) Q(s,a) + \alpha_i(r)$
    }
    \Else{
    $Q(s,a) \gets (1-\alpha_i) Q(s,a) + \alpha_i(r +  \max_{a \in A(S')} Q(s', a'))$
    }
    }
}

\Return $Q$

\end{algorithm}

\section{Experiments}
\label{sec:expt}

We implemented Algorithm~\ref{alg:rql} in tabular form as well as with a neural network. For the tabular implementation, we quantized the vector $v$ directly after its computation to ensure that the Q-table remains small and discrete. For the neural network implementation we used the techniques used in DQN~\cite{Mnih15}, replay buffers and target networks, for additional stability. The details of this implementation can be found in the appendix.

We consider three examples: one to demonstrate the application of Recursive Q-learning for synthesizing probabilistic programs, one to demonstrate convergence in the single-exit setting, and one to demonstrate the use of a context-free reward machine. We compare Recursive Q-learning to Q-learning where the RMDP is treated as an MDP by the agent, i.e., the agent treats stack calls and returns as if they were normal MDP transitions.

\subsection{Cloud computing}
The cloud computing example, introduced in Example~\ref{ex1}, is a recursive probabilistic program with decision points for an RL agent to optimize over. The optimal strategy is to select the reliable server and to never upgrade. This strategy produces an expected total reward of $-5.3425$. Figure~\ref{fig:curves} shows that Recursive Q-learning quickly converges to the optimal solution while Q-learning oscillates around a suboptimal policy.

\begin{wrapfigure}[13]{r}{0.265\textwidth}
    \centering
    \vspace{-0.43cm}
    \includegraphics[scale=0.168]{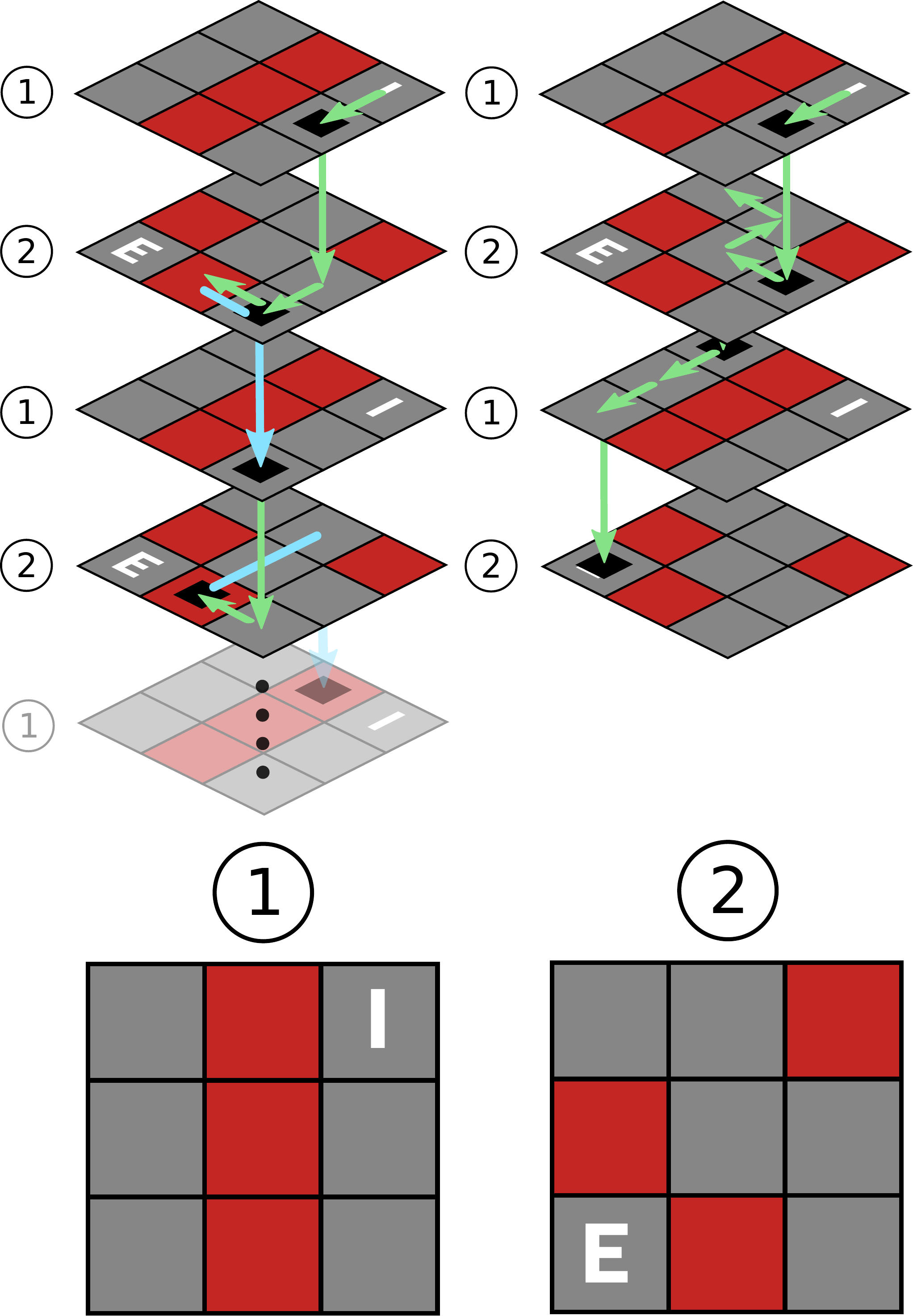}
\end{wrapfigure}
\subsection{Infinite spelunking}
Consider a single-exit RMDP gridworld with two box types, $1$ and $2$, shown at the bottom of the figure to the right. These box types are the two types of levels in an infinitely deep cave. When falling or descending to another level, the level type switches. Passing over a trap, shown in red, results in the agent teleporting to a random position and falling with probability $0.5$. 

The agent has fallen into the cave at the position denoted by $I$ without climbing equipment. 
However, there is climbing equipment in one of the types of levels at a known location denoted by $E$. The agent has four move directions---north, east, south, west---as well as an additional action to descend further or ascend. Until the climbing equipment is obtained, the agent can only descend. Once the climbing equipment is obtained, the traps no longer affect the agent and the agent can ascend only from the position where it fell down. With probability $0.01$ the agent ascends from the current level with the climbing gear. This has the effect of box-wise discounting with discount factor $0.99$. The agent's objective is to leave the cave from where it fell in as as soon as possible. The reward is $-1$ on every step.

There are two main strategies to consider. The first strategy tries to obtain the climbing gear by going over the traps. This strategy leads to an unbounded number of possible levels since the traps may repeatedly trigger. The second strategy avoids the traps entirely. The figure to the right shows partial descending trajectories from these strategies, with the actions shown in green, the trap teleportations shown in blue, and the locations the agent fell down from are shown as small black squares. Which strategy is better depends on the probability of the traps triggering. With a trap probability of $0.5$, the optimal strategy is to try and reach the climbing equipment by going over the traps. Figure~\ref{alg:rql} shows the convergence of Recursive Q-learning to this optimal strategy while the strategy learned by Q-learning does not improve.

\begin{figure}[t]
    \centering
    \begin{minipage}{0.34\textwidth}
    \includegraphics[scale=0.32]{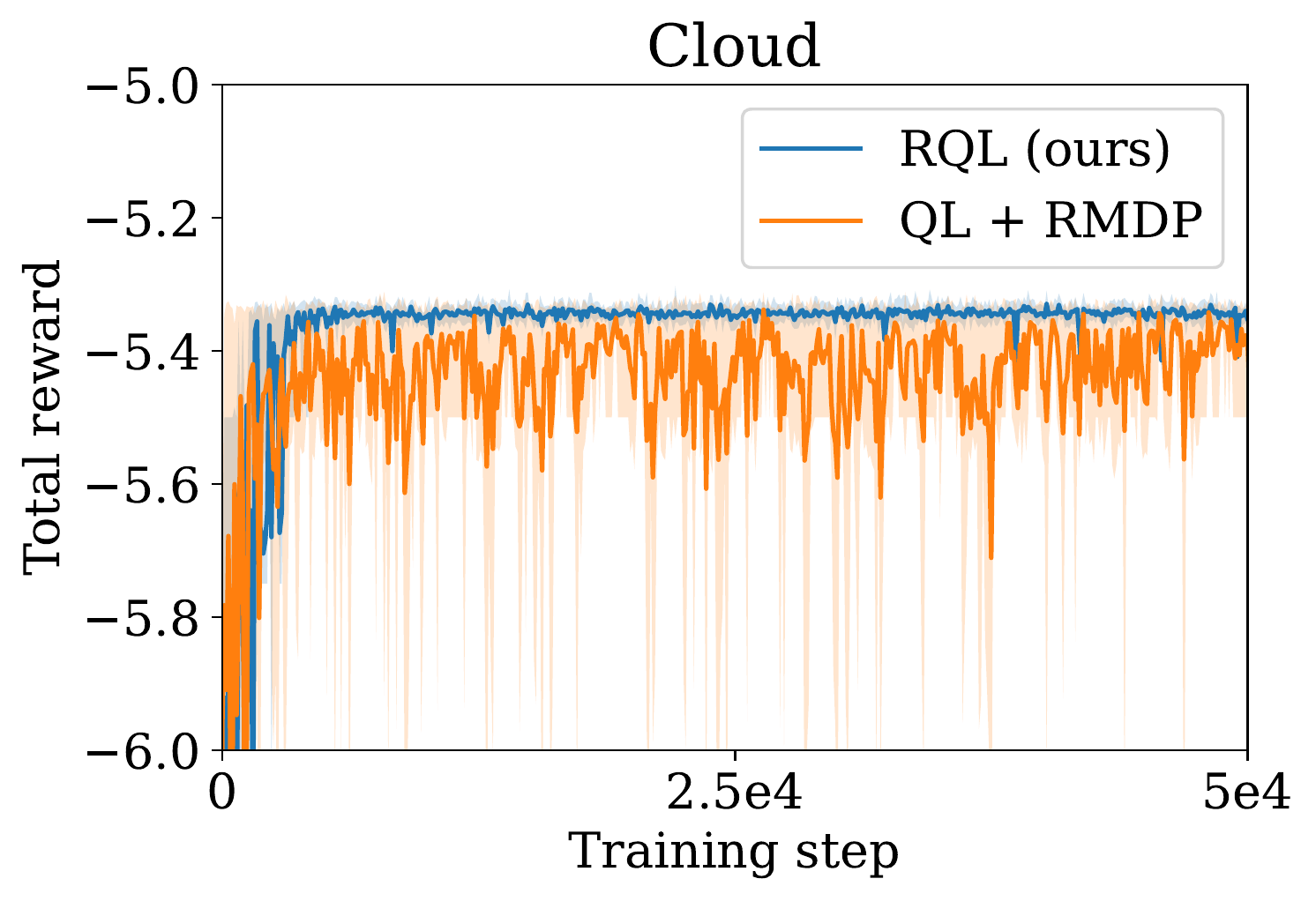}
    \end{minipage}%
    \begin{minipage}{0.33\textwidth}
    \includegraphics[scale=0.32]{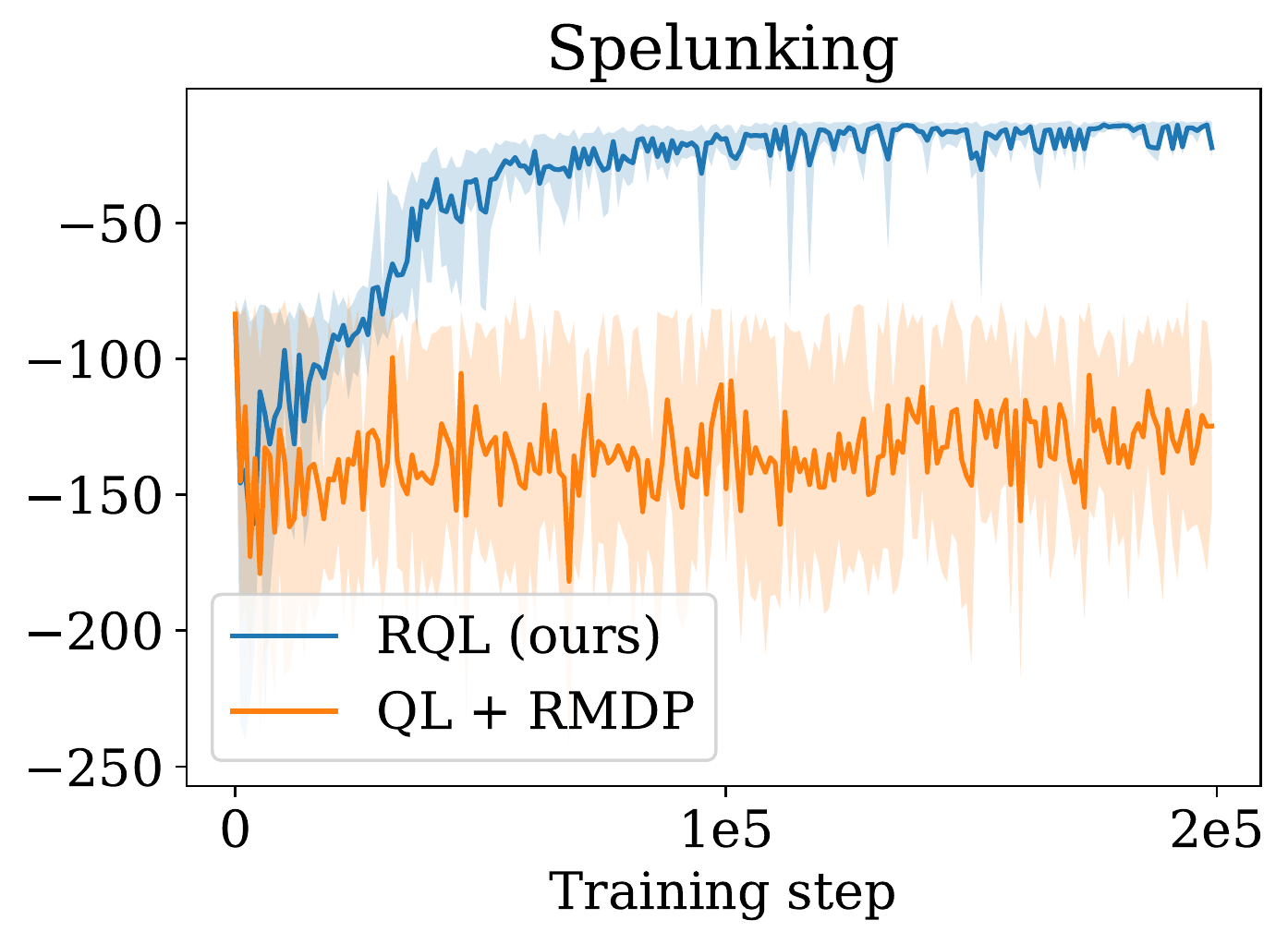}
    \end{minipage}%
    \begin{minipage}{0.33\textwidth}
    \includegraphics[scale=0.32]{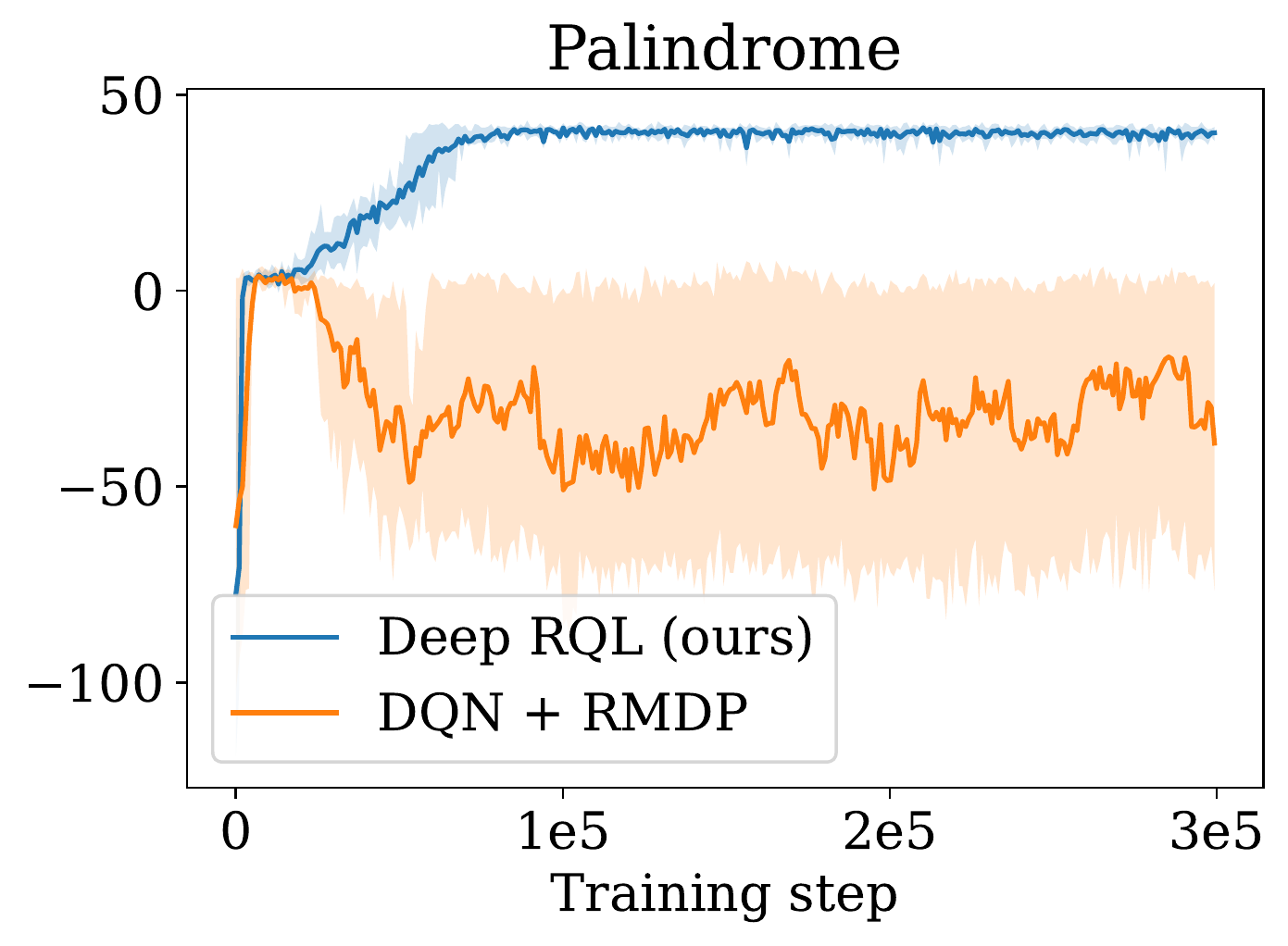}
    \end{minipage}
    \caption{Learning curves. Ten runs were performed for each method. The mean is shown as a dark line. The region between the $10^\text{th}$ and $90^\text{th}$ percentile is shaded. RQL refers to tabular Recursive Q-learning, QL + RMDP refers to using tabular Q-learning on the RMDP, Deep RQL refers to Deep Recursive Q-learning, and DQN + RMDP refers to using Deep Q-networks on the RMDP.}
    \label{fig:curves}
\end{figure}

\subsection{Palindrome gridworld}
To demonstrate the ability to incorporate context-free objectives, consider a $3 \times 3$ gridworld with a goal cell in the center and a randomly selected initial state. The agent has four move actions---north, east, south, west---and a special control action. The objective of the agent is to reach the goal cell while forming an action sequence that is a palindrome of even length. What makes this possible is that when the agent performs an action that pushes against a wall of the gridworld, no movement occurs. To monitor the progress of the property, we compose this MDP with a nondeterministic pushdown automaton. The agent must use its special action to determine when to resolve the nondeterminism in the pushdown automaton. Additionally, the agent uses its special action to declare the end of its action sequence. To ensure properness, the agent's selected action is corrupted into the special action with probability $0.01$. The agent is given a reward of $50$ upon success, $-5$ when the agent selects an action that causes the pushdown automaton to enter a rejecting sink, and $-1$ on all other timesteps.

Figure~\ref{fig:curves} shows the convergence of Deep Recursive Q-learning to an optimal strategy on this example, while DQN fails to find a good strategy.

\section{Conclusion}
\label{sec:conclusion}

Reinforcement learning so far has primarily considered Markov decision processes (MDPs). Although extremely expressive, this formalism may require ``flattening'' a more expressive representation that contains recursion. In this paper we examine the use of recursive MDPs (RMDPs) in reinforcement learning---a setting we call recursive reinforcement learning. A recursive MDP is a collection of MDP components, where each component has the ability to recursive call each other. This allows the introduction of an unbounded stack. We propose abstracting this stack with the costs of the exits of a component. Using this abstraction, we introduce Recursive Q-learning---a model-free reinforcement learning algorithm for RMDPs. We demonstrate the potential of our approach on a set of examples that includes probabilistic program synthesis, a single-exit RMDP, and an MDP composed with a context-free property.

\paragraph{Acknowledgments.}
\includegraphics[height=8pt]{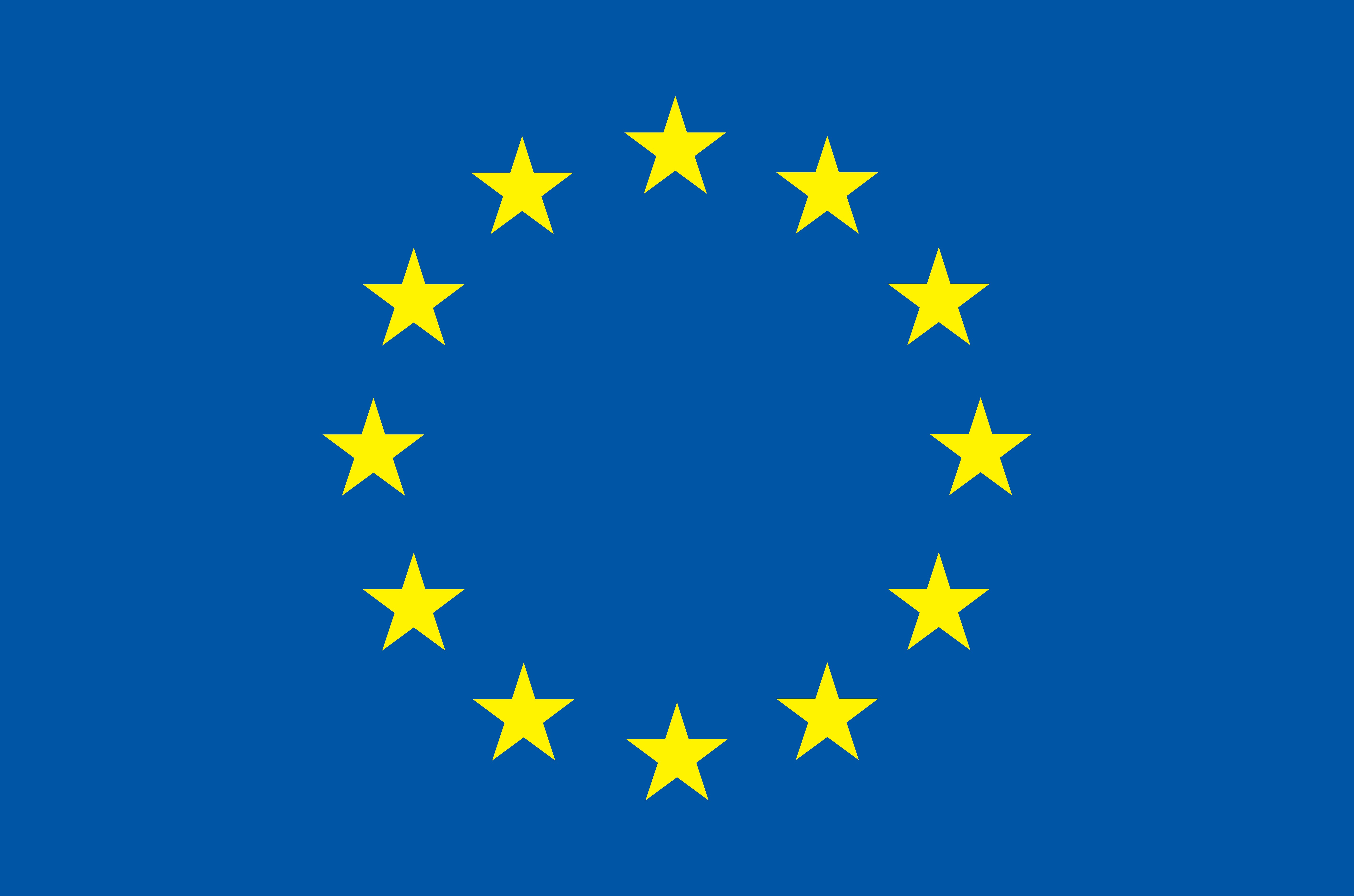} This project has received funding from the European Union’s Horizon 2020 research and innovation programme under grant agreements 864075 (CAESAR), and 956123 (FOCETA).
This work is supported in part by the National Science Foundation (NSF) grant CCF-2009022 and by NSF CAREER award CCF-2146563. 
This work utilized the Summit supercomputer, which is supported by the National Science Foundation (awards ACI-1532235 and ACI-1532236), the University of Colorado Boulder, and Colorado State University. The Summit supercomputer is a joint effort of the University of Colorado Boulder and Colorado State University.

\bibliography{papers}
\newpage

\appendix

\section{Implementation details}

Algorithm~\ref{alg:deep-rql} shows the extension of tabular Recursive Q-learning to the neural network setting by using the additional techniques introduced by DQN. The algorithm performs two steps per sample: the first computes $v$ and pushes the required values to the replay buffer, and the second samples the replay buffer and updates the neural network via the mean squared error. 

For sampling trajectories, we sampled actions with the standard $\varepsilon$-greedy policy. We matched the hyperparameters for (deep) Recursive Q-learning and (deep) Q-learning on each example, with the discount factor set to $\lambda = 1$ for the latter. The experiments were run on a server with 12 CPU cores and no GPU.

\paragraph{Hyperparameters.}
\phantom{a}

\begin{figure*}[h]
\begin{minipage}[t]{0.33\textwidth}
\centering

Cloud
\vspace{0.1in}

\begin{tabular}{c|c}
    Parameter & Value \\ \hline
    Test samples & $100$ \\
    Learning rate $\alpha$ & $0.02$ \\
    Exploration rate $\varepsilon$ & $0.1$ \\
    Quantize amount & $0.001$
\end{tabular}
\end{minipage}%
\begin{minipage}[t]{0.33\textwidth}
\centering

{Spelunking}
\vspace{0.07in}

\begin{tabular}{c|c}
    Parameter & Value \\ \hline
    Test samples & $100$ \\
    Learning rate $\alpha$ & $0.2$ \\
    Exploration rate $\varepsilon$ & $0.1$
\end{tabular}
\end{minipage}%
\begin{minipage}[t]{0.33\textwidth}
\centering

Palindrome
\vspace{0.1in}

\begin{tabular}{c|c}
    Parameter & Value \\ \hline
    Test samples & $100$ \\
    Initial $\varepsilon$ & $1$ \\
    Final $\varepsilon$ & $0.1$ \\
    Final $\varepsilon$ timestep & $30k$ \\
    Buffer size & $20k$ \\
    Buffer warm up & $1k$ \\
    Batch size $N$ & $256$ \\
    Update frequency $C$ & 500 \\
    Hidden layers & $2$ \\
    Hidden dimension & $128$ \\
    Activation function & tanh \\ 
    Learning rate & $0.0005$ \\
    Optimizer & Adam
\end{tabular}
\end{minipage}
\end{figure*}

\begin{algorithm}[t]
\caption{Deep Recursive Q-learning (DQN-style)}\label{alg:deep-rql}

Buffer size $N$, update frequency $C$

Initialize network parameters $\theta$

Set target network parameters $\theta^{-} \gets \theta$

Initialize empty replay buffer

\While{not converged}{
    $v \gets \mathbf{0}$

    $\text{stack} \gets \emptyset$

    Sample trajectory $\tau \sim \{(s,a,r,s'), ...\}$
    
    \For{$s,a,r,s'$ in $\tau$}{
    
    \tcp*[h]{Push update to replay buffer}
    
    \uIf{entered box}{
    
    $\{s_{\text{exit}_1}, \ldots, s_{\text{exit}_n}\} \gets \text{getExits}(s')$ 
    
    $v' \gets [\max_{a' \in A(s_{\text{exit}_1})} Q(s_{\text{exit}_1}, v, a'; \theta), \ldots, \max_{a' \in A(s_{\text{exit}_n})} Q(s_{\text{exit}_n}, v, a'; \theta)]$
    
    $v'_{\min{}} \gets \min(v')$
    
    $v' \gets v' - v'_{\min{}}$
    
    buffer.add(\emph{entered box}, ($s$, $v$, $a$, $r$, $s'$, $v'$), $v'_{\min{}}$)
    
    stack.push($v$)
    
    $v \gets v'$
    }
    \uElseIf{exited box}{
    $\{s_{\text{exit}_1}, \ldots, s_{\text{exit}_n}\} \gets \text{getExits}(s)$
    
    Set $k$ such that $s' = s_{\text{exit}_k}$
    
    buffer.add(\emph{exited box}, ($s$, $v$, $a$, $r$, $s'$, $v$), $v(k)$)
    
    $v \gets$ stack.pop()
    }
    \Else{
        buffer.add(\emph{normal}, ($s$, $v$ $a$, $r$, $s'$, $v$), $\bot$)
    }
    
    \vspace{0.1in}
    
    \tcp*[h]{Update network}
        
    Sample minibatch $\{\emph{type}_j, (s_j, v_j, a_j, r_j, s'_j, v'_j), \emph{aux}_j\}_{j=1}^N$ of size $N$ from replay buffer
    
    \For{$j = 1,\ldots,N$}
    {$
    \text{targ}_j \gets 
    \begin{cases}
        r_j + \max_{a' \in A(s'_j)} Q(s'_j, v'_j, a'_j; \theta^{-}) + \emph{aux}_j & \emph{type}_j = \emph{entered box} \\
        r_j + \emph{aux}_j & \emph{type}_j = \emph{exited box}\\
        r_j +  \max_{a \in A(s'_j)} Q(s'_j, v'_j, a'_j; \theta^{-}) & \emph{type}_j = \emph{normal}
    \end{cases}
    $}
    
    $\mathcal{L} = \frac{1}{N} \sum_{j=1}^N (Q(s_j,v_j,a_j) - \text{targ}_j)^2$
    
    Update $\theta$ with respect to loss $\mathcal{L}$ with gradient descent
    
    Set $\theta^{-} \gets \theta$ every $C$ steps
    }
}

\Return $\theta$
\end{algorithm}

\section{Discussion on discounting}
There are multiple choices for discounting in RMDPs. The most straightforward choice is that discounting by $\lambda$ is equivalent to stopping the entire process with probability $1-\lambda$. We call this step-wise discounting. One can transform this type of discounting into a total reward model satisfying Assumption~\ref{asm:proper} by adding a special exit (the \emph{exit-lane}) which leads to the special exit in the box above with no reward. This type of discounting is further discussed in Appendix~\ref{sec:PAC}. An alternative choice for discounting in single-exit RMDPs is that discounting by $\lambda$ is equivalent to stopping the current box---by leaving out of its only exit---with probability $1-\lambda$. 
We call this box-wise discounting. 
With this type of discounting, the discount factor can be incorporated into Recursive Q-learning by simply multiplying the terms $\max_{a' \in A(S')} Q(s', a')$ and $\max_{a' \in A(s_{\text{exit}})} Q(s_{\text{exit}}, a')$ in Algorithm~\ref{alg:rql1e} by $\lambda$. However, box-wise discounting does not necessarily ensure properness for all single-exit RMDPs and discount factors $\lambda < 1$. Instead, the discount factor must be sufficiently small. Note that box-wise and step-wise discounting schemes are equivalent in MDPs (RMDPs with no recursive calls). Our results subsume these settings by considering total reward under the properness assumption.

\section{Exponential Succinctness of Hierchical MDPs (Proof from Section~\ref{sec:intro})}

\begin{lemma}
\label{lem:exp_succ}
Hierarchical MDPs are exponentially more succinct than finite-state MDPs.
\end{lemma}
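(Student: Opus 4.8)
The plan is to prove succinctness by a standard two-part argument. The \emph{upper} direction—that every hierarchical MDP has an equivalent finite-state MDP—is already supplied by the expressive equivalence noted in the introduction, so the work lies entirely in the \emph{lower} bound. I would exhibit a family $\{H_n\}_{n \ge 1}$ of hierarchical MDPs whose description size is $O(n)$, yet for which every equivalent finite-state MDP must have at least $2^n$ states. Since a hierarchical MDP is an RMDP whose call graph is acyclic, the natural construction is a chain of components in which each level duplicates the level below.

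Concretely, let $H_n$ consist of components $C_0, \dots, C_n$, where $C_0$ is a constant-size base MDP and, for $i \ge 1$, the component $C_i$ contains two boxes, both mapped to $C_{i-1}$, wired in series (entry $\to$ box $\to$ box $\to$ exit). Each $C_i$ has $O(1)$ vertices, so the total description size is $O(n)$; but substituting every box by (a copy of) its target component replaces each instance of $C_i$ by two instances of $C_{i-1}$, and inductively the flat semantics of $H_n$ contains $2^n$ copies of $C_0$, hence $\Omega(2^n)$ reachable states. This establishes the exponential blow-up of the canonical flat representation.

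A blow-up of the flattening alone is not sufficient, since a finite-state MDP may reuse states through loops; I therefore need the $2^n$ reachable states to be \emph{pairwise inequivalent}, so that none can be merged. I would argue this via the bisimulation quotient (equivalently, a fooling-set argument on the value function): if the $2^n$ states fall into $2^n$ distinct equivalence classes, then any MDP realizing the same behaviour refines this quotient and hence has at least $2^n$ states, which together with $|H_n| = O(n)$ yields the exponential gap. To force the separation I would attach a reward schedule that distinguishes the states by their \emph{future} behaviour—for instance letting the action chosen at level $j$ contribute a reward $2^{j}$ that is only cashed out during a subsequent, forced read-out phase—so that each of the $2^n$ descent choices leads to a configuration with a distinct optimal continuation value.

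The distinguishability step is the crux and the only subtle point. Naive duplication is \emph{not} enough: a perfectly symmetric binary unfolding is bisimilar to a linear chain of depth $n$ and collapses to merely $n+1$ classes, giving no lower bound whatsoever. The construction must deliberately break this symmetry so that the $2^n$ states genuinely differ in what they can still achieve going forward, and the technical heart of the proof is verifying that the chosen reward and exit gadget separates all $2^n$ continuation values—this is precisely where multi-exit components, which can return distinct information up the hierarchy, are convenient. Once separation is established, the quotient lower bound and the $O(n)$ description-size bound are routine, and together they prove the lemma.
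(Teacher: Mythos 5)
Your lower-bound family is the same as the paper's: a chain of $n$ components, each invoking the level below twice, giving description size $O(n)$ and a flat semantics with $\Omega(2^n)$ reachable states; the paper's version (Lemma~\ref{lem:exp_succ}) makes each component offer one action $a$ that earns reward $1$ (with the final accept) and sends every other action to a zero-reward sink, so the optimal value of $M_n$ is $2^n-1$, attained only by $2^n-1$ consecutive choices of $a$. Where you diverge is the separation step, and there your diagnosis is off. It is not true that, with per-step unit rewards and observable termination, the symmetric unfolding ``is bisimilar to a linear chain of depth $n$'' with only $n+1$ classes: the flat semantics is a path of length $2^n-1$ on which the residual optimal values are $2^n-1, 2^n-2, \ldots, 1$, pairwise distinct, so the reward-sensitive bisimulation quotient already has $2^n-1$ classes (you appear to have conflated the $n+1$ distinct component shapes of the hierarchical \emph{description} with equivalence classes of the flat \emph{semantics}; only a reward-free unfolding collapses). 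Equivalently, any finite MDP with fewer states must repeat a state along the forced $a$-path and could then pump reward past $2^n-1$. Consequently the machinery you declare to be the technical heart---level-indexed rewards $2^j$, a forced read-out phase, and multi-exit components returning information up the hierarchy---is unnecessary (and, as proposed, left unverified): unit rewards with a deviation sink suffice, and the paper's witness uses only single-exit, deterministic components. Your route would likely succeed if the gadget were carried out, and your quotient/fooling-set framing of the lower bound is a reasonable formalization of what the paper leaves implicit, but it solves a difficulty that the simple construction does not actually have. The upper direction you cite (hierarchical MDPs flatten to finite MDPs) matches the paper's premise.
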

\begin{proof}
This proof is adapted from a similar result on the (non-stochastic) recursive state machines from~\cite{AKY99}.
Consider a collection of hierarchical, deterministic MDPs $M_1, M_2, \ldots, M_n$. 
The MDP $M_1$ upon taking an action $a$ gives a reward of $1$ and terminates, while it gives a reward of $0$ for any other action and moves to a sink. 
Each MDP $M_i$ (for $i > 1$) upon an action $a$ calls the MDP $M_{i-1}$ twice in sequence and then accepts with a reward of $1$, and for other actions it makes a transition to a sink without providing any reward.
This MDP $M_n$ has the property that the optimal value is $2^n-1$ and the optimal policy corresponds to choosing $2^n-1$ $a$'s in succession. 
This environment can only be expressed by a finite-state MDP with at least $2^n-1$ states.
\end{proof}

\section{Proof of Theorem~\ref{thm:totalundec}}
\undec*
\begin{proof}
We make use of the existing results for the {\em probabilistic finite automata} (PFAs) model. PFAs are essentially finite automata where nondeterminism is replaced by probabilistic transitions. Specially, when a letter is read, the next state is selected by chance with a fixed probability distribution (that depends on the current state and letter only) instead of the controller selecting the new state. A word is accepted by a PFA if the probability of reaching the special accept state after reading this word is higher than a given fixed threshold $\lambda \in [0,1]$. Madani, Hanks, and Condon~\cite{MADANI20035} showed undecidability of checking the non-emptiness for $\lambda = 1/2$ and a {\em leaky PFA} that is a PFA in which at each step we stop the run (or, equivalently, move to a non-accepting state with a self loop) with a fixed probability $\gamma$. 

As shown in~\cite{EtessamiWY19}, RMDPs can simulate probabilistic finite automata. We adapt this to show how proper RMDPs can simulate leaky PFAs. Such a RMDP has a single component consisting of a box corresponding to each input letter. The number of exits of this component is the number of states in the PFA plus one additional exit-lane exit. One of these exits corresponds to the special accepting state of the PFA. There is a single entry of this component which is the only non-trivial choice point for the controller. Each input letter has a corresponding action that leads to the box corresponding to this letter with probability $1-\gamma$ and with probability $\gamma$ to the exit-lane exit. There is a special start action that leads to the exit corresponding to the initial state of the PFA. Once a box corresponding to a letter, $a$, is exited the probabilistic transitions corresponding to the effect of reading $a$ in the PFA takes place by transitioning to the exit of the component corresponding to the new state of the simulated PFA. An exit-lane exit of any box is connected directly to the exit-lane exit of the component it is in and the reward for such transitions is 0. Note that all that that happens once exit-lane is reached is that the whole content of the stack is popped which results in terminating without modifying the accumulated reward so far. Note that this RMDP is proper, because in each step there is a fixed $\gamma$ chance of entering the exit-lane and terminating (once the whole stack is popped), so the expected number of steps taken by any strategy is finite. 

We claim that the controller has a strategy that terminates at the accepting exit (with empty stack content) with probability $\geq \frac{1}{2}$ iff there exists a word accepted by a leaky PFA (and so its language is non-empty) with the same probability $\geq \frac{1}{2}$. The strategy would pick the letter of the input word in reverse by calling the corresponding box and once done select the special start action. It is easy to see that the behavior of the leaky PFA (including stopping with probability $\gamma$) is mimicked precisely.

Now, one can easily encode termination objective using rewards: assign reward of 1 to the final single transition just before the RMDP reaches the accepting exit with an empty stack content and reward of 0 to all other transitions. This shows that the the strategy existence problem for the expected total reward objectives is at least as hard as the strategy existence problem for termination objective which we showed to be as hard as non-emptiness of the language accepted by a leaky PFA; an undecidable problem.
\end{proof}

Note that the above proof carries over to a discounted setting, so the strategy existence problem in such a setting is also undecidable.  

\section{Proof of Theorem \ref{thm:fix-point-correspondence}}
\fixedpc*

\begin{proof}
We first show that the fixed point $y$ of $\textrm{OPT}_{\tt recur}$ is unique and 
$y(\sseq{\kappa}, q) = \ETotal^M(\sseq{\kappa}, q)$.

The values $\ETotal^M(\sseq{\kappa}, q)$ are indeed a fixed point of $\textrm{OPT}_{\tt recur}$. Consider any state $\sseq{\kappa}, q$. We proceed over all possible cases for the type of $q$.

If $q {=} (b, en) \in \Call$ then
$\ETotal^M(\sseq{\kappa}, q) = \ETotal^M(\sseq{\kappa, b}, en)$, and if
$q \in \Ex, (b, q) \in \return(b), \kappa {=} \sseq{\kappa', b}$ then
$\ETotal^M(\sseq{\kappa}, q) = \ETotal^M(\sseq{\kappa'}, (b, q))$, both 
by definition of MDP $\sem{M}$. 
Similarly, if $q \in \Ex$ then
$\ETotal^M(\sseq{\emptyset}, q) = 0$ as we immediately terminate.
Finally, if $q$ is any other type of vertex, then 
\[
\ETotal^M(\sseq{\kappa}, q) = \max\limits_{a \in A(q)} \Bigl\{ r(q, a) {+} \sum\limits_{q' \in Q}  p(q' | q, a) \ETotal^M(\sseq{\kappa}, q')\Bigr\},
\]
because the best one can do while at $(\sseq{\kappa}, q)$ is to pick an action that maximizes the one-step reward plus the weighted average of the expected total reward from the successor state. 

On the other hand, any strategy that picks any such an action achieves $\ETotal^M(\sseq{\kappa}, q)$. Let us denote such a strategy by $\sigma$. This works because $\sigma$ is guaranteed to terminate within a finite number of steps, $K$, when starting at any $(\langle\emptyset\rangle,q)$ due to Assumption \ref{asm:proper}. Note that the time to exit the current box when at any $(\langle\kappa\rangle,q)$ is also at most $K$. This is because when starting at $(\langle\kappa\rangle,q)$ and at $(\langle\emptyset\rangle,q)$ the transition structure of the model looks the same until an exit of the current box is reached. (Specifically, a mapping of $(\langle\kappa \kappa'\rangle,q)$ to $(\langle\kappa'\rangle,q)$ for every $\kappa'$ and $q$ is an isomorphism.) As a result, we get an upper bound of $K\cdot|\kappa|$ for the expected time to terminate from $(\langle\kappa\rangle,q)$ for any $\kappa$ and $q$.  

Note now that, for any $\epsilon >0$, the probability of terminating when starting at $(\langle\kappa\rangle,q)$ after $K\cdot |\kappa| / \epsilon$ steps is at most $\epsilon$, because otherwise the expected termination time would not $\leq K\cdot|\kappa|$. 
This gives us a bound of $K\cdot|\kappa|\cdot r_{max}$ for the expected total reward from $(\langle\kappa\rangle,q)$, because in each transition we can get at most $r_{max}$.
In such a setting, the results from Chapter 7 of \cite{Put94} imply that $\sigma$ is optimal.

If there was another fixed point such that at least one coordinate is higher than $\ETotal^M$, then a  strategy constructed as above would obtain more than the supremum over all possible strategies of the expected total reward when starting at that state; a contradiction. On the other hand, if there was any other smaller fixed point than $\ETotal^M$, then by using repeatedly using choices made by the strategy $\sigma$ defined above we would converge at $\ETotal^M$, but on the other we should never be able to improve based on the assumption that this is a fixed point; a contradiction. (For more details see the end of the proof of Theorem \ref{thm:unique-fixed-point} in Appendix \ref{sec:single}).

Now suppose that there exists a fixed point point $y$ of $\textrm{OPT}_{\tt recur}$ and 
a fixed point $x$ of $\textrm{OPT}_{\tt cont}$ such that
$y(\sseq{\emptyset},q) \neq x({\bf 0},q)$.
We now show how to reconstruct a fixed point $y'(\sseq{\kappa}, q)$ of $\textrm{OPT}_{\tt recur}$ that corresponds exactly to the fixed point $x({\bf v}, q)$. This would lead to a contradiction as we just showed that $\textrm{OPT}_{\tt recur}$ has a unique fixed point. 

This will be done by a recursive parallel fixed point reconstruction process below. Let the current call stack be $\sseq{\kappa}$ and valuation of the exits be $\bf v$. We start the process at $\kappa = \emptyset$ and $\bf v = \bf 0$.
We assign the values as follows.
\begin{eqnarray*}
y'(\sseq{\kappa}, q) &=& 
\begin{cases}
    \text{make recursive call for $\kappa:=\kappa b$ and ${\bf v}:=(x({\bf v},q'))_{q'\in \return_b}$} & q = (b, en) \in \Call\\
    {\bf v}(q) & q \in \Ex \\
\max\limits_{a \in A(q)} \Bigl\{ r(q, a) {+} \sum\limits_{q' \in Q}  p(q' | q, a) x({\bf v}, q')\Bigr\} & \text{otherwise.}
\end{cases}
\end{eqnarray*}

It is clear that the value of all the states are the same in $y'$ and $x$, so the optimal strategies would be the same as well.  
\end{proof}

\section{PAC Learnability for RMDPs with Discounting and the Milder Restrictions from Section \ref{sec:rmdp}}
\label{sec:PAC}

In this section, we start with defining discounted rewards with the usual semantics. We then provide PAC learning results for RMDPs with discounted rewards (instead of the requirements on the expected decline of the stack size and the expected rewards) (Section \ref{ssec:discounted}),
and then extend these results to the undiscounted case (Section \ref{ssec:undiscounted}).

\subsection{Discounted Rewards}
In the discounted setting, the objective in a RMDP $M$ is to find a strategy $\sigma \in \Sigma_M$ that maximizes the {\it discounted reward} 
$\EDisct(\lambda)^M_\sigma(s)$, which is defined as
\[
\EDisct(\lambda)^M_\sigma(s) = \lim_{N \to \infty} \eE^M_\sigma(s) 
\left \{\sum_{1 \leq i \leq N}
  \lambda^{i-1} r(X_{i-1}, Y_i)\right \},
\]
for some discount factor $0 \leq \lambda < 1$. 
The optimal (discounted) value $\EDisct(\lambda)^M_*(s)$ is then defined as 
\[
\EDisct(\lambda)^M_*(s) = \sup_{\sigma \in \Sigma_M} \EDisct(\lambda)^M_\sigma(s).
\]
We say that $\sigma$ is discounted optimal if $\EDisct(\lambda)^M_*(s) = \EDisct(\lambda)^M_\sigma(s)$ for every $s \in S$.

\subsection{PAC Learning for RMDPs with Discounted Rewards}
\label{ssec:discounted}

\begin{lemma}\label{lemma:RMDP}
Let $M$ be an RMDP with diameter $d$, $M'$ be an RMDP that differs from $M$ only by using a different transition function $\delta_{M'}$ $\varepsilon$-close to $\delta_M$, and let $\sigma$ an $\varepsilon$-proper strategy.
Then $|\EDisct(\lambda)^M_\sigma(s)-\EDisct(\lambda)^{M'}_\sigma(s)| \leq \frac{\varepsilon d}{(1-\lambda)^2}$.
\end{lemma}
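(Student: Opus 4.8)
The plan is to prove a simulation-lemma–style estimate via a telescoping identity on the value functions, letting the discount factor absorb the accumulated effect of the perturbed transitions. Throughout I would view both $\EDisct(\lambda)^M_\sigma$ and $\EDisct(\lambda)^{M'}_\sigma$ as bounded functions on the (countable) state space of the semantic MDP $\sem{M}$ (or, if $\sigma$ is history-dependent, on the tree of histories). The support-nesting part of the $\varepsilon$-closeness hypothesis guarantees that every history reachable under $\sigma$ in $M'$ is also reachable in $M$, so that $\sigma$ and both value functions live on a common domain. Since one-step rewards are bounded by $d = r_{max}$ and $\lambda < 1$, both value functions are bounded in sup-norm by $\frac{d}{1-\lambda}$ irrespective of properness; the $\varepsilon$-properness of $\sigma$ merely certifies that it is an admissible strategy on both models.

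Write $P_M^\sigma$ and $P_{M'}^\sigma$ for the one-step transition operators induced by $\sigma$ in $M$ and $M'$, acting on bounded functions; each is an averaging operator and hence a non-expansion in $\|\cdot\|_\infty$. The Bellman equations read $\EDisct(\lambda)^M_\sigma = r^\sigma + \lambda\,P_M^\sigma\,\EDisct(\lambda)^M_\sigma$ and likewise for $M'$, with the \emph{same} reward term $r^\sigma$ because $M$ and $M'$ differ only in $\delta$. Subtracting and inserting $\pm\,\lambda P_{M'}^\sigma \EDisct(\lambda)^M_\sigma$ gives
\[
\EDisct(\lambda)^M_\sigma - \EDisct(\lambda)^{M'}_\sigma = \lambda\,(P_M^\sigma - P_{M'}^\sigma)\,\EDisct(\lambda)^M_\sigma + \lambda\,P_{M'}^\sigma\bigl(\EDisct(\lambda)^M_\sigma - \EDisct(\lambda)^{M'}_\sigma\bigr).
\]
Since $\lambda < 1$ and $P_{M'}^\sigma$ is a non-expansion, the Neumann series $\sum_{k\ge 0}\lambda^k (P_{M'}^\sigma)^k$ converges in $\|\cdot\|_\infty$ to an operator of norm $\le \frac{1}{1-\lambda}$, so I can solve the relation as $\EDisct(\lambda)^M_\sigma - \EDisct(\lambda)^{M'}_\sigma = \sum_{k\ge 0}\lambda^{k+1}(P_{M'}^\sigma)^k (P_M^\sigma - P_{M'}^\sigma)\,\EDisct(\lambda)^M_\sigma$.

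To finish I would bound a single summand. For any state/history with current vertex–action pair $(q,a)$, the perturbation satisfies $\bigl|(P_M^\sigma - P_{M'}^\sigma)\,\EDisct(\lambda)^M_\sigma\bigr| \le \|\delta_M(q,a)-\delta_{M'}(q,a)\|_1 \cdot \|\EDisct(\lambda)^M_\sigma\|_\infty$; here I use that the semantic transitions $T_M, T_{M'}$ are identical and deterministic at call ports and exits, so any discrepancy arises only in the ``otherwise'' case, where it equals the discrepancy of $\delta$. The $\varepsilon$-closeness bound $\sum_{q,a,r}|\delta_M(q,a)(r)-\delta_{M'}(q,a)(r)|\le\varepsilon$ forces $\|\delta_M(q,a)-\delta_{M'}(q,a)\|_1 \le \varepsilon$ per pair, and $\|\EDisct(\lambda)^M_\sigma\|_\infty \le \frac{d}{1-\lambda}$, so each summand has sup-norm at most $\lambda^{k+1}\varepsilon\frac{d}{1-\lambda}$ (the non-expansion $(P_{M'}^\sigma)^k$ does not increase it). Summing the geometric series in $k$ yields $\frac{\lambda}{1-\lambda}\cdot\varepsilon\frac{d}{1-\lambda}=\frac{\lambda\varepsilon d}{(1-\lambda)^2}\le\frac{\varepsilon d}{(1-\lambda)^2}$, the claimed bound (the stated bound is the slightly looser version obtained by dropping $\lambda\le 1$). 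The main obstacle I anticipate is the infinite-state bookkeeping: justifying the operator identity and the Neumann series on the countably infinite, tree-structured domain and confirming that the relevant operators act as genuine $\|\cdot\|_\infty$ non-expansions. All of this is underwritten by $\lambda<1$ together with the support-nesting hypothesis; the per-step perturbation estimate and the reward bound are then routine.
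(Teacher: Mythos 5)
Your proof is correct, but it takes a genuinely different route from the paper's. The paper argues directly on trajectories: it bounds the difference of the expected step-$i$ rewards by $d$ times the probability---at most $1-(1-\varepsilon)^{i}$---that the discrepancy between $\delta_M$ and $\delta_{M'}$ has been realized within the first $i$ steps (an implicit coupling of the two processes), and then evaluates $\sum_{i\geq 1} d\lambda^{i-1}\bigl(1-(1-\varepsilon)^{i}\bigr)$ in closed form, obtaining the intermediate constant $\frac{\varepsilon d}{(1-\lambda)(1-\lambda+\lambda\varepsilon)}\leq\frac{\varepsilon d}{(1-\lambda)^2}$. You instead subtract the two Bellman fixed-point equations for the fixed strategy $\sigma$, insert the cross term, and invert $I-\lambda P^{\sigma}_{M'}$ by a Neumann series, bounding each application of the perturbation operator by $\varepsilon\cdot\frac{d}{1-\lambda}$; this is sound, since the paper's global $\varepsilon$-closeness condition (a sum over \emph{all} triples $(q,a,r)$) certainly implies the per-pair $\ell_1$ bound you use, the semantic transitions at call ports and exit nodes are stack moves identical in $M$ and $M'$ (so the perturbation indeed lives only in the ``otherwise'' case), and the support-nesting clause keeps $\sigma$ and both value functions defined on a common history space. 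What each buys: your operator identity makes rigorous the infinite-state bookkeeping that the paper's one-line coupling estimate leaves informal, and it yields the slightly sharper constant $\frac{\lambda\varepsilon d}{(1-\lambda)^2}$; the paper's trajectory argument is shorter and its refined constant $\frac{\varepsilon d}{(1-\lambda)(1-\lambda+\lambda\varepsilon)}$ is incomparable to yours (tighter as $\lambda\to 1$, looser for small $\lambda$), with both dominated by the stated bound. Your side observation that $\varepsilon$-properness plays no real role in the discounted estimate is also consistent with the paper, whose proof likewise never invokes it---boundedness of rewards and $\lambda<1$ suffice.
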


\begin{proof}
We can simply estimate
\newline
$$\begin{array}{ll}
|\EDisct(\lambda)^M_\sigma(s)-\EDisct(\lambda)^{M'}_\sigma(s)| & \leq \sum_{i = 1}^\infty
  \lambda^{i-1}| \eE^M_\sigma(s)r(X_{i-1}, Y_i) - 
  \eE^{M'}_\sigma(s)r(X_{i-1}, Y_i) 
  |\\
  &\leq \sum_{i = 1}^\infty d \lambda^{i-1}(1-(1-\varepsilon)^{i}) = \frac{d}{1-\lambda}-\frac{(1-\varepsilon) d}{1-\lambda + \lambda\varepsilon}
  \\ &
  = \frac{\varepsilon d}{(1-\lambda)(1-\lambda+\lambda\varepsilon)} \leq \frac{\varepsilon d}{(1-\lambda)^2}\ ,
\end{array}$$
where the first inequality is by triangulation, while the second is estimating the chance, that the difference between $\delta$ and $\delta'$ has been realized within the first $i$ steps by $1-(1-\varepsilon)^i$.
This bounds the sum of the different probabilities of taking a transition; the difference in the rewards is in this case bounded by the diameter $d$. 
\end{proof}

As the estimation from the previous lemma survives a supremum operation over all $\varepsilon$-proper strategies we get the following corollary.

\begin{corollary}
For an $\varepsilon'$-proper RMDP $M$ with diameter $d$, and an $M'$ that differs from $M$ only by using a different transition function $\delta_{M'}$ $\varepsilon$-close to $\delta_M$ for $\varepsilon \leq \varepsilon'$, then $|\EDisct(\lambda)^M(s)-\EDisct(\lambda)^{M'}(s)| \leq \frac{\varepsilon d}{(1-\lambda)^2}$.
\end{corollary}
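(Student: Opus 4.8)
The plan is to lift the per-strategy estimate of Lemma~\ref{lemma:RMDP} to the optimal values by exploiting that its bound $\frac{\varepsilon d}{(1-\lambda)^2}$ is uniform over strategies. First I would observe that $M$ and $M'$ share the same vertices and actions---they differ only in their transition probabilities---so the underlying structure of $\sem{M}$ and $\sem{M'}$, and hence the strategy space, coincide. Consequently both optima $\EDisct(\lambda)^M(s)=\sup_{\sigma}\EDisct(\lambda)^M_\sigma(s)$ and $\EDisct(\lambda)^{M'}(s)=\sup_{\sigma}\EDisct(\lambda)^{M'}_\sigma(s)$ range over one and the same set of strategies.

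Next I would verify that Lemma~\ref{lemma:RMDP} applies to every strategy in this common space. Since $M$ is $\varepsilon'$-proper, each strategy terminates in a uniformly bounded expected number of steps against every transition function that is $\varepsilon'$-close to $\delta_M$. Because $\varepsilon\le\varepsilon'$, every $\varepsilon$-close perturbation is a fortiori $\varepsilon'$-close, so each strategy is in particular $\varepsilon$-proper. Thus the hypotheses of Lemma~\ref{lemma:RMDP} are met for any $\sigma$ and the fixed pair $(M,M')$, giving $|\EDisct(\lambda)^M_\sigma(s)-\EDisct(\lambda)^{M'}_\sigma(s)|\le \frac{\varepsilon d}{(1-\lambda)^2}$ with a right-hand side independent of $\sigma$.

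Finally I would invoke the elementary fact that a uniform pointwise bound survives a supremum. Writing $c=\frac{\varepsilon d}{(1-\lambda)^2}$, for any $\sigma$ we have $\EDisct(\lambda)^M_\sigma(s)\le \EDisct(\lambda)^{M'}_\sigma(s)+c\le \EDisct(\lambda)^{M'}(s)+c$; taking the supremum over $\sigma$ yields $\EDisct(\lambda)^M(s)\le \EDisct(\lambda)^{M'}(s)+c$, and the symmetric argument gives the reverse inequality, whence $|\EDisct(\lambda)^M(s)-\EDisct(\lambda)^{M'}(s)|\le c$, which is the claim.

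I expect the only real subtlety---rather than a genuine obstacle---to lie in the bookkeeping of the first two paragraphs: one must confirm that the strategy being optimized over is legitimately $\varepsilon$-proper for both models, so that the per-strategy lemma may be applied uniformly, and that the two suprema are taken over the same shared strategy set. Once the $\sigma$-independence of the bound is pinned down, the remaining step is just the standard $|\sup f-\sup g|\le\sup|f-g|$ inequality and requires no further work.
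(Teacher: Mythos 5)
Your proposal is correct and takes essentially the same route as the paper, which derives the corollary in one line by observing that the strategy-uniform bound of Lemma~\ref{lemma:RMDP} survives the supremum over all $\varepsilon$-proper strategies. Your extra bookkeeping---that the strategy spaces of $M$ and $M'$ coincide and that $\varepsilon \leq \varepsilon'$ makes every strategy $\varepsilon$-proper, so the lemma applies to each $\sigma$ with a $\sigma$-independent bound before invoking $|\sup_\sigma f(\sigma) - \sup_\sigma g(\sigma)| \leq \sup_\sigma |f(\sigma)-g(\sigma)|$---is precisely the justification the paper leaves implicit.
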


\begin{corollary}
Consequently, it suffices to learn the transition function $\delta_M$ of an $\varepsilon'$-proper RMDP
\begin{itemize}
    \item with precision $\varepsilon'' = \min\big\{\varepsilon',\frac{\varepsilon(1-\lambda)^2}{d}\big\}$ to obtain (efficient) PAC learnability if we can (efficiently) evaluate the approximate RMDP $M'$ obtained, or
    \item with precision $\varepsilon'' = \frac{1}{2}\min\big\{\varepsilon',\frac{\varepsilon(1-\lambda)^2}{d}\big\}$ if we can (efficiently) approximately evaluate $M'$ with precision $\varepsilon''$
\end{itemize} 
with probability $\delta$.
\end{corollary}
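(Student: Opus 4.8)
The plan is to reduce PAC-approximating the discounted value $\EDisct(\lambda)^M(s)$ to the purely statistical task of estimating the transition function $\delta_M$, using the preceding corollary as the bridge. That corollary shows that replacing $\delta_M$ by any transition function $\delta_{M'}$ that is $\varepsilon''$-close to $\delta_M$ (with $\varepsilon'' \leq \varepsilon'$ and with the support of $\delta_{M'}$ contained in that of $\delta_M$) changes the discounted value by at most $\frac{\varepsilon'' d}{(1-\lambda)^2}$. Hence it suffices to learn a model $M'$ whose transition function meets a prescribed $\ell_1$-precision and then to evaluate $M'$; the corollary is stated conditionally on the availability of such an (approximate or exact) evaluation procedure, so I would not attempt to establish evaluability here.

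First I would fix the transition-learning precision so that the model-induced value error stays within the target accuracy $\varepsilon$. Requiring $\frac{\varepsilon'' d}{(1-\lambda)^2} \leq \varepsilon$ forces $\varepsilon'' \leq \frac{\varepsilon(1-\lambda)^2}{d}$, while the preceding corollary only applies when $\varepsilon'' \leq \varepsilon'$, which is also exactly the condition that keeps $M'$ (and all its strategies) proper. Taking $\varepsilon'' = \min\{\varepsilon', \frac{\varepsilon(1-\lambda)^2}{d}\}$ meets both constraints at once, giving the first bullet: if the learned model $M'$ can be evaluated exactly, the only remaining error is the model approximation, which is at most $\varepsilon$, and efficiency follows whenever both the model learning and the exact evaluation are efficient.

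For the second bullet, where $M'$ can only be evaluated approximately, I would split the error budget across the two independent error sources by the triangle inequality. Halving the admissible precision, i.e. taking $\varepsilon'' = \frac{1}{2}\min\{\varepsilon', \frac{\varepsilon(1-\lambda)^2}{d}\}$, forces the model-induced error to at most $\frac{\varepsilon}{2}$, and spending the complementary half of the budget on the approximate evaluation of $M'$ yields a total error of at most $\varepsilon$. The confidence level $1-\delta$ carries through unchanged, since only the model-learning step is randomized.

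The one genuinely substantive step, and the main obstacle, is justifying that the empirical transition function can be driven $\varepsilon''$-close to $\delta_M$ with probability $1-\delta$ using finitely many (for the efficiency claim, polynomially many) samples. Here I would assume access to a generative model so that every pair $(q,a)$ can be sampled, estimate each $\delta_M(q,a)$ by its empirical distribution, and invoke a standard distribution-learning concentration bound in $\ell_1$ together with a union bound over the finitely many state--action pairs to control the aggregate $\sum_{q,a,r}|\delta_M(q,a)(r) - \delta_{M'}(q,a)(r)|$. The support-containment requirement in the definition of $\varepsilon$-closeness is automatic, since an empirical distribution is supported only on transitions actually observed, which lie in the support of $\delta_M$. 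Converting the joint accuracy-and-confidence target into an explicit polynomial sample complexity, and making the exploration/generative-model assumption precise, is where the real care is needed; everything else is the routine budgeting sketched above.
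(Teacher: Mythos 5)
Your proposal is correct and follows essentially the same route as the paper, which states this corollary without proof as a direct consequence of the preceding corollary's bound $|\EDisct(\lambda)^M(s)-\EDisct(\lambda)^{M'}(s)| \leq \frac{\varepsilon d}{(1-\lambda)^2}$, with exactly your budgeting: $\varepsilon'' \leq \varepsilon'$ to stay within the $\varepsilon'$-proper regime, $\varepsilon'' \leq \frac{\varepsilon(1-\lambda)^2}{d}$ to cap the model-induced error, and a halved budget when evaluation is itself only approximate. Your filled-in statistical step (generative-model sampling, empirical $\ell_1$ estimates with a union bound, and the observation that support containment is automatic for empirical distributions) is precisely what the paper waves at with its remark that ``this is a standard condition that does not differ much from MDPs.''
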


But this is a standard condition that does not differ much from MDPs and provides the following theorem.

\begin{theorem}
Taking as input $\frac{1}{\varepsilon'},\frac{1}{\varepsilon},\frac{1}{\delta},d,n$, we can construct with probability $\geq \delta$, for an $\varepsilon'$-proper RMDP M with $n$ states and diameter $d$, $\EDisct(\lambda)^M(s)$ with precision $\varepsilon$.
If the evaluation (approximation) of an RMDP $M'$ that differs from $M$ only in using $\delta_{M'}$ instead of $\delta_M$ with precision $\min\big\{\varepsilon',\frac{\varepsilon(1-\lambda)^2}{d}\big\}$ ($\frac{1}{2}\min\big\{\varepsilon',\frac{\varepsilon(1-\lambda)^2}{d}\big\}$)
is tractable in the parameters above, then we can efficiently PAC learn $\frac{1}{2}\min\big\{\varepsilon',\frac{\varepsilon(1-\lambda)^2}{d}\big\}$.
\end{theorem}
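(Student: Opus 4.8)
The plan is to decouple the statistical content from the computational content and to let the preceding corollaries do the heavy lifting on the value-approximation side. By Lemma~\ref{lemma:RMDP} and the corollaries following it, once we have built an RMDP $M'$ that differs from $M$ only in its transition function, with $\delta_{M'}$ being $\varepsilon''$-close to $\delta_M$ for $\varepsilon'' = \min\{\varepsilon',\frac{\varepsilon(1-\lambda)^2}{d}\}$, we automatically get $|\EDisct(\lambda)^M(s)-\EDisct(\lambda)^{M'}(s)|\leq \varepsilon$; in the variant where $M'$ can only be evaluated approximately, one halves the budget to $\frac12\varepsilon''$ so that the $\frac12\varepsilon''$ transition error plus the $\frac12\varepsilon''$ evaluation error still sum to total precision $\varepsilon$. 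Hence it suffices to (i) learn $\delta_M$ to within total $L_1$-error $\varepsilon''$ (resp. $\frac12\varepsilon''$) with confidence $\geq 1-\delta$, and (ii) verify that the overall procedure is polynomial whenever evaluating $M'$ is tractable.

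For the statistical step, I would estimate, for each of the at most $n\cdot|A|$ state-action pairs $(q,a)$, the distribution $\delta_M(q,a)$ by the empirical distribution $\hat\delta(q,a)$ formed from $m$ independent draws of the successor vertex. A key point is that $\hat\delta(q,a)$ places mass only on vertices that were actually observed, so its support is contained in that of $\delta_M(q,a)$; this is precisely the support condition demanded by the definition of $\varepsilon$-closeness, so the reconstructed $M'$ is an admissible neighbour of $M$.

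Next I would invoke a standard concentration bound for empirical distributions over a domain of size at most $n$: with $m = O\!\big(\frac{n+\log(1/\delta''')}{\eta^2}\big)$ samples one has $\sum_{r}|\hat\delta(q,a)(r)-\delta_M(q,a)(r)|\leq \eta$ with probability $\geq 1-\delta'''$. Because the definition of $\varepsilon$-closeness sums the $L_1$ errors over \emph{all} state-action pairs, I would set a per-pair budget $\eta = \varepsilon''/(n|A|)$ and a per-pair failure probability $\delta''' = \delta/(n|A|)$; a union bound over the pairs then yields total $L_1$-error $\leq \varepsilon''$ with confidence $\geq 1-\delta$, i.e.\ that $M'$ is $\varepsilon''$-close to $M$. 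The resulting per-pair sample count $m$ is polynomial in $n$, $|A|$, $1/\varepsilon''$, and $\log(1/\delta)$, and since $1/\varepsilon''$ is polynomial in $1/\varepsilon'$, $1/\varepsilon$, $\frac{1}{1-\lambda}$, and $d$, the total sample budget and the time to assemble $M'$ are polynomial in the stated inputs.

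Finally I would combine the pieces: the sampling step outputs, with probability $\geq 1-\delta$, an $\varepsilon''$-close (resp.\ $\frac12\varepsilon''$-close) model $M'$, and the corollaries convert this into the value guarantee of precision $\varepsilon$. Since every step other than evaluating $M'$ is already polynomial, the hypothesis that (approximate) evaluation of $M'$ is tractable in the inputs immediately upgrades the construction to an efficient one, establishing efficient PAC learnability. I expect the main obstacle to be not the concentration argument, which is routine, but justifying that independent samples of each $\delta_M(q,a)$ are actually obtainable: with a generative/sampling oracle (the standard PAC-RL assumption of~\cite{agarwal2019reinforcement}) this is immediate, whereas relying on trajectory access alone would force one to use $\varepsilon'$-properness to bound the expected time to reach each vertex and hence the expected number of restarts needed to collect $m$ samples per pair—this is exactly where the properness assumption earns its keep.
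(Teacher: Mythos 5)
Your proposal follows exactly the route the paper intends: the preceding corollaries reduce the theorem to learning $\delta_M$ within precision $\min\bigl\{\varepsilon',\frac{\varepsilon(1-\lambda)^2}{d}\bigr\}$ (halved under approximate evaluation), and the paper then simply declares this ``a standard condition that does not differ much from MDPs,'' which is precisely the per-pair $L^1$ concentration bound, the $\varepsilon''/(n|A|)$ budget split with a union bound, and the support-containment observation that you spell out. Your filled-in details (including the remark on generative versus trajectory access, where $\varepsilon'$-properness matters) are correct and consistent with the paper's definitions, so this is the same proof with the routine steps made explicit.
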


Naturally, the discounted value for a given RMDP can be approximated by unraveling sufficiently deeply.

\begin{corollary}
For every $\varepsilon'$-proper RMDP, $\EDisct(\lambda)^M(s)$ is PAC-learnable.
\end{corollary}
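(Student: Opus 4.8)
The plan is to combine the reduction already established in this section with a concrete—if inefficient—evaluation procedure for the learned model, namely unraveling the discounted recursion to a finite depth. By the preceding corollary it suffices to (a) learn the transition function $\delta_M$ to precision $\varepsilon'' = \tfrac{1}{2}\min\{\varepsilon', \varepsilon(1-\lambda)^2/d\}$ with the required confidence $\delta$, and (b) exhibit a procedure that approximately evaluates the resulting $\varepsilon''$-close RMDP $M'$ to precision $\varepsilon''$. Step (a) is the standard statistical task: using a generative model one samples each of the finitely many state–action pairs $(q,a)$ enough times that the empirical distribution is within $L^1$-distance $\varepsilon''$ of $\delta_M(q,a)$; Hoeffding's inequality together with a union bound over all $(q,a)$ shows that a number of samples polynomial in $1/\varepsilon''$, $\log(1/\delta)$, and the number of state–action pairs suffices. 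The empirical estimate automatically has support contained in that of $\delta_M$, so $M'$ inherits $\varepsilon$-properness and its discounted value is well defined.

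For step (b) I would use the unraveling observation noted just above the statement. Since $M'$ is discounted with factor $\lambda < 1$ and has diameter $d$, the absolute contribution of all steps after a horizon $N$ is bounded by the geometric tail $\sum_{i>N}\lambda^{i-1} d = \lambda^N d/(1-\lambda)$. Choosing $N$ so that this tail falls below $\varepsilon''$ truncates the problem to a finite horizon. The key point is that within $N$ steps the call stack can grow by at most one box per step, so every reachable stack context has height at most $N$; as each component is finite, only finitely many states of $\sem{M'}$ are reachable within the horizon, and the truncated object is an ordinary finite MDP. Its optimal $N$-step discounted value, computed by backward induction (value iteration), approximates $\EDisct(\lambda)^{M'}(s)$ to within $\varepsilon''$.

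Combining the two steps with the stability bound $|\EDisct(\lambda)^M(s)-\EDisct(\lambda)^{M'}(s)| \le \varepsilon'' d/(1-\lambda)^2 \le \varepsilon/2$ from the earlier corollary and the $\varepsilon''$ evaluation error, the triangle inequality yields an overall estimate within $\varepsilon$ of $\EDisct(\lambda)^M(s)$ with probability at least $\delta$, which is precisely PAC-learnability. Efficiency is deliberately not claimed here: the number of reachable states in the unraveled MDP can grow exponentially in $N$, and hence in $1/(1-\lambda)$, so this argument delivers PAC-learnability but not the efficient PAC-learnability of the preceding theorem.

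I expect the only genuinely delicate point to be the bookkeeping around properness of the learned model. One must verify that the support-subset requirement in the definition of $\varepsilon$-closeness is met—which the empirical estimator satisfies for free, since unseen transitions receive zero empirical mass—and that $\varepsilon$-properness of $M$ transfers to $M'$, so that the infinite-depth discounted value of $M'$ is finite and the truncation error is genuinely governed by the geometric tail rather than by the behaviour of the unbounded stack. Everything else is a routine concatenation of the earlier stability lemma, a standard distribution-learning bound, and finite-horizon value iteration, which is why the corollary follows almost immediately once the unraveling evaluation is in place.
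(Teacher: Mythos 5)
Your proposal is correct and follows essentially the same route as the paper: the paper's proof of this corollary is precisely the one-line observation that the discounted value of the learned RMDP can be approximated by unraveling it to a sufficient finite depth, with the statistical learning step and the stability bound $\frac{\varepsilon d}{(1-\lambda)^2}$ already supplied by the preceding lemma, corollaries, and theorem. Your elaborations---the geometric-tail choice of horizon $N$ with $\lambda^N d/(1-\lambda) \leq \varepsilon''$, the finiteness of the unraveled MDP because the stack grows by at most one per step, the support-subset property of the empirical estimator, and the explicit disclaimer that efficiency is not claimed since the unraveling may be exponentially large---are exactly the details the paper leaves implicit, and note that discounting alone bounds the tail by $d\lambda^N/(1-\lambda)$, so the truncation error needs no appeal to properness at all.
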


\subsection{PAC Learning for RMDPs with Undiscounted Rewards}
\label{ssec:undiscounted}

We now generalize this result to the undiscounted case we refer to in the paper.
The first step is a reduction, which shows that the discounted case can be viewed as a special case of the undiscounted case by encoding the discounted payoffs by using an adjusted RMDP.
We then continue to adjust the resulting RMDP further to add costs to ``stopping'' the RMDP in accordance with the level we are in.

This will then turn into a cost estimation, whereby the exit lane is entered to overestimate the effect of changing the strategy.
For this to be estimated with a low cost, we use the knowledge that the expected number of steps till termination is finite.

\subsubsection{From Discounting to Stopping}
\label{sssec:exitLane}
In MDPs, discounting can be simulated by stopping in each step with a chance of $1-\lambda$, while continuing with a chance of $\lambda$.
In RMDPs, there is no equivalent of stopping immediately, as we will first have to move down the stack.
However, this can be simulated by adding a special exit to each component and block: from this exit of a block, we always continue with a reward of $0$ to the special exit of the component, and thus have a cost-free direct way to termination, albeit it takes as many steps as the stack is high.

Once equipped with such an \emph{exit lane} without rewards, we can use a reduction similar to the standard stopping reduction for MDPs: we simply go with probability $1-\lambda$ to the special exit of the component, and continue with a probability of $\lambda$, i.e., all former probabilities are multiplied by $\lambda$.

\subsubsection{Adding Rewards to the Exit Lane}

In order to connect the construction with the parameters from Section \ref{sec:rmdp}, we first consider the effect of changing the rewards on the exit lane.

If we change the rewards of all transitions that lead to the special exit from $0$ to an upper bound $b \geq 0$ or a lower bound $-b \leq 0$ for the maximal or minimal expected reward for \emph{any} strategy for \emph{any} state with empty stack for \emph{any} RMDP $M'$ $\varepsilon$-close to $M$, then the collected reward when moving down the \emph{exit lane} with rewards can serve as an over- or underrepresentation of the remaining reward for any strategy.

In turn the over- and underestimations for a stack of hight $h$ are over- and underestimations for the remaining rewards after $h$ steps.

\subsubsection{Connecting our Parameters with Discounting}

To see that the parameters we have provided generalize the case of discounting, note that, for an RMDP with discounting, one can choose $b= \frac{d}{1-\lambda}$, $c_o = 1+ \frac{1}{1-\lambda}$, and $\mu = 1 -\lambda$.

To justify the choice of $b= \frac{d}{1-\lambda}$, we observe that the expected sum of the weight-discounted number of steps is at most $\frac{1}{1-\lambda}$, such that the absolute value of the rewards collected is at most the diameter times this number.

To justify the choice for $c_o$ and $\mu$, we first define and estimate a different value for $M^+$ (and $M^-$): we define the value of a state as
\begin{enumerate}
    \item the height of the stack plus $\frac{1}{1-\lambda}$ if the RMDP is still running and we are not on the exit lane,
    \item the height of the stack if we are on the exit lane, and
    \item $0$ if the RMDP has terminated.
\end{enumerate}
In the first case, the expected value of the state will drop by at least $1-\lambda$: this is because it can only increase by $1$ when the exit lane is not entered (which happens with a likelihood of $\lambda$), and goes down by $1+\frac{1}{1-\lambda}$ if the exit lane is entered (with a likelihood of $1-\lambda$), which leads to an expected value that is at least $2(1-\lambda)$ smaller.

In the second case, the value is always reduced by $1$.

Thus, whenever the RMDP has not terminated after $k$ steps, the expected value of this sum is reduced by (at least) $\min\{1,2(1-\lambda)$\}, and therefore by at least $1-\lambda$.

Consequently, the expected value of this sum is bounded by $c_o - \mu \cdot \sum_{i=1}^k p_{\mathsf{run}}^{M_\sigma'}(k)$, and it in turn bounds the stack height.

\subsubsection{Wrapping up the Proof}

\totalPAC*

To wrap this up to show Theorem \ref{theo:totalPAC}, we can estimate the effect of moving to a different position due to a change of a strategy by moving to the exit lane with either reward $b$ (for overestimation) or $-b$ (for underestimation).

Thus, if we refer to an (arbitrary but fixed) RMDP $M'$ $\varepsilon$-close to $M$, and to the over- and under approximation of with exit lanes with rewards $b$ and $-b$ and a likelihood to transition to the exit lane of $\varepsilon$ as $M^+$ and $M^-$, respectively, we have 

$$\ETotal_\sigma^{M^+}\big((\langle\emptyset\rangle,q)\big) \geq
\ETotal_\sigma^{M'}\big((\langle\emptyset\rangle,q)\big)  \geq
\ETotal_\sigma^{M^-}\big((\langle\emptyset\rangle,q)\big) $$

for all strategies $\sigma$, and thus

$$\ETotal^{M^+}\big((\langle\emptyset\rangle,q)\big)  \geq
\ETotal^{M'}\big((\langle\emptyset\rangle,q)\big)  \geq
\ETotal^{M^-}\big((\langle\emptyset\rangle,q)\big) \ .$$

In order to estimate $\ETotal_\sigma^{M^+}(s) -
\ETotal_\sigma^{M^-}(s)$, we first observe that the expected stack height must be converging to $0$ for $M_\sigma$ by our assumption that it is bounded by $c_o - \mu \sum_{i-1}^k p_{\mathsf{run}}^{M_\sigma}(k)$ by our assumption.
(Note that, for convergence, it is enough to assume termination in expected finite time.
The stronger assumption is used for to establish PAC learnability.)

For a given $\varepsilon'>0$ we therefore pick a $k$ such that, after $k' \geq k$ steps, the expected stack height is lower than $ \frac{\varepsilon'}{b}$.
E.g., we can choose $k$ such that $k \mu \frac{\varepsilon'}{b} \geq c_o$ holds, such that the falling sequence $p_{\mathsf{run}}^{M_\sigma}(k)$ cannot be above $\frac{\varepsilon'}{b}$ for $k$ steps.
This is satisfied for $k = \Big\lceil \frac{c_ob}{\mu \varepsilon'}\Big\rceil $.

We then pick $\varepsilon = \frac{\varepsilon'}{k^2 b}$. 

In this case, the likelihood of moving onto the exit lane before step $k$ is lower than $\frac{\varepsilon'}{k c}$, and the difference in rewards along the exit lane in this case below $k c$, leaving a contribution to the difference of below $\varepsilon'$.

When entering the exit lane on or after step $k$, the expected stack height is below $ \frac{\varepsilon'}{b}$, leading to a contribution to the expected distance between the rewards for $M^+$ and $M^-$ below $\varepsilon'$.

Together, this provides
$$\ETotal_{\sigma}^{M^+}\big((\langle\emptyset\rangle,q)\big) -
\ETotal_{\sigma}^{M^-}\big((\langle\emptyset\rangle,q)\big)
\leq 2\varepsilon'\ .$$

This shows that the expected value converges. We still need to show that we can approximate it to up to $4\varepsilon'$ with a likelihood of at least $1-\delta$.

For this, we can estimate $\ETotal_{\sigma}^{M^+}\big((\langle\emptyset\rangle,q)\big)$ and $\ETotal_{\sigma}^{M^-}\big((\langle\emptyset\rangle,q)\big)$ by cutting the runs after step $k$, then the error in the expected value is at most $\varepsilon'$.

Estimating $\ETotal_{\sigma}^{M}\big((\langle\emptyset\rangle,q)\big)$ by cutting the runs after $k$ steps leads to a value in between.
Using triangulation, this entails that it estimates the value of $\ETotal_{\sigma}^{M}\big((\langle\emptyset\rangle,q)\big)$.

But this can be unravelled into a finite MDP, and hence be PAC learned up to precision $\varepsilon'$ with likelihood at least $1-\delta$ with standard techniques.

\section{Reinforcement Learning for Single-Exit RMDPs}
\label{sec:single}

\uniqfp*
\begin{proof}[Proof of Theorem \ref{thm:unique-fixed-point}]
The proof proceeds over several lemmas.
\begin{lemma}
$\ETotal^M(q)$ is a fixed point of $F$. 
\end{lemma}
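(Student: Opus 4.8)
The plan is to verify directly that the vector $\bigl(\ETotal^M(q)\bigr)_{q \in Q}$ of optimal total-reward values (each computed from $q$ with the empty stack) satisfies every line of the defining system $F$. First I would dispose of the easy vertices. For a component exit $ex$, reaching $(\sseq{\emptyset}, ex)$ terminates the run with no further reward, so $\ETotal^M(ex) = 0$, the base value the system assigns to exits. For any non-call vertex $q$ (an internal node or a return port), the semantics $\sem{M}$ prescribe an ordinary one-step transition that does not touch the stack: action $a$ yields reward $r(q,a)$ and moves to $(\sseq{\emptyset}, q')$ with probability $p(q' \mid q, a)$. Properness (Assumption~\ref{asm:proper}) guarantees that every $\ETotal^M(q')$ is finite and that the total-reward objective is well defined, so the standard stochastic-shortest-path dynamic-programming argument (Chapter~7 of~\cite{Put94}) yields the Bellman optimality identity
\[
\ETotal^M(q) = \max_{a \in A(q)} \Bigl\{ r(q,a) + \sum_{q' \in Q} p(q' \mid q, a)\,\ETotal^M(q') \Bigr\},
\]
which is precisely the ``otherwise'' line of $F$.

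The substantive step is the call-port line: for $q = (b,en) \in \Call$ with $ex$ the unique exit of $Y(b)$, I must show $\ETotal^M\bigl((b,en)\bigr) = \ETotal^M(en) + \ETotal^M\bigl((b,ex)\bigr)$. The single-exit hypothesis is what makes this decomposition valid, and I would prove the two inequalities separately. For ``$\geq$'', I would glue a near-optimal strategy for running from $en$ until the unique exit of $Y(b)$ is reached to a near-optimal strategy from the return port $(b,ex)$; executed from $(b,en)$ this composite collects, up to $\varepsilon$, the amount $\ETotal^M(en)$ inside the box and $\ETotal^M\bigl((b,ex)\bigr)$ after the return, so the supremum $\ETotal^M\bigl((b,en)\bigr)$ is at least their sum. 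For ``$\leq$'', I would take any strategy $\sigma$ from $(b,en)$ and split each run at the first time box $b$ is exited. Because $Y(b)$ has a single exit, every run that leaves the box does so through the same return port $(b,ex)$, so the pre-split behavior is a strategy for reaching the exit from $en$ and contributes at most $\ETotal^M(en)$, while the post-split behavior starts from $(b,ex)$ and contributes at most $\ETotal^M\bigl((b,ex)\bigr)$ regardless of the preceding history. Summing and taking the supremum over $\sigma$ gives the reverse inequality, hence equality.

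Two facts from the semantics underpin the call-port argument, and I would state them explicitly. The first is the stack-shift isomorphism already used in the proof of Theorem~\ref{thm:fix-point-correspondence}: the reward collected from $(\sseq{\kappa b}, en)$ until $b$ is exited has the same law as the reward collected from $(\sseq{\emptyset}, en)$ until termination, so ``running inside the box'' is genuinely measured by $\ETotal^M(en)$. The second is that properness forces box $b$ to be exited almost surely in finite expected time, so the split above is well defined and no reward leaks into nonterminating runs. I expect the ``$\leq$'' direction to be the main obstacle: $\sigma$ may be history-dependent, and one must argue that conditioning the continuation on the in-box history cannot beat the fixed bound $\ETotal^M\bigl((b,ex)\bigr)$. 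The single-exit property is exactly what collapses all these histories onto the same return vertex and rescues the additive bound; in a genuine multi-exit component this is where the argument would break, which is consistent with the decomposition failing there.
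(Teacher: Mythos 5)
Your proof is correct and takes essentially the same route as the paper's: a case analysis on vertex type in which the call-port identity $\ETotal^M\bigl((b,en)\bigr) = \ETotal^M(en) + \ETotal^M\bigl((b,ex)\bigr)$ is obtained by combining near-optimal strategies for the lower bound and, for the upper bound, by using properness to guarantee the unique exit is reached almost surely and arguing that the in-box and post-return phases are each bounded by the corresponding optimal values, while non-call vertices are handled by the standard Bellman optimality argument. You merely make explicit some details the paper leaves informal, such as the two-inequality structure, the stack-shift isomorphism, and why history-dependent continuations cannot exceed $\ETotal^M\bigl((b,ex)\bigr)$.
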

\begin{proof}
Let us consider any vertex $v$. 

If $v$ is an entry port, i.e., 
$q = (b, en) \in \Call, ex = \Ex_{Y(b)}$ then once the box $b$ is entered, by Assumption \ref{asm:proper}, we will be almost surely reach the unique exit $ex'$ of $b$ no matter which action will be picked inside of $b$. Therefore, to maximize the total reward it suffices to focus solely first on maximizing the reward while inside $b$ before $ex'$ is reached no matter what the current call stack is.
The maximal reward possible to obtain is equal to $\ETotal^M(en)$ by definition. 
Once the exit is reached, it suffices to maximize the reward from the exit port $(b,ex')$, which is 
at most equal to $\ETotal^M((b,ex'))$. It is easy to see that these two strategies can be combined into a single strategy from $q$ that can get arbitrarily close to the maximum possible value of $\ETotal^M(en) + \ETotal^M((b,ex'))$. So this all together shows that
$\ETotal^M(q) = \ETotal^M(en) + \ETotal^M((b,ex'))$ holds.

If $v$ is any other non-exit vertex, then the best we can do is to pick any action available at $v$ that maximizes expected reward from the successor vertex plus the one step reward for picking this action at $v$. Once the successor node is reached, we can then switch to the best possible strategy from that node. This shows that the value of $\max_{a\in A(q)} \Bigl\{r(q,a) + \sum\limits_{q' \in Q} p(q'|q, a) x(q') \Bigr\}$ is not only achievable but also the most one can expect to get when starting at $v$.
\end{proof}

\begin{lemma}
\label{lem:FP-stackless}
For any fixed point $\bfx$ of $F$, there exists a stackless strategy $\sigma(\bfx)$ such that $\ETotal^M_{\sigma(\bfx)}(q) = \bfx(q)$ for every $q$. 
\end{lemma}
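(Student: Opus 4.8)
The plan is to read off a stackless strategy $\sigma = \sigma(\bfx)$ greedily from the fixed point and then certify that it attains value $\bfx$ by lifting $\bfx$ to the infinite state space of $\sem{M}$ and recognizing the lift as the value function of the induced Markov chain. First I would set $\sigma(q)$ to be any action attaining the maximum in the ``otherwise'' branch of $F$, i.e.\ $\bfx(q) = r(q,\sigma(q)) + \sum_{q'\in Q} p(q'\mid q,\sigma(q))\,\bfx(q')$; at call ports and at the exit there is no choice to resolve. Since $\sigma(q)$ depends only on the current vertex $q$, it is stackless, and by Assumption~\ref{asm:proper} it is automatically proper.

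The core of the argument is a lift $\hat{x}\colon S_M \to \Real$ of $\bfx$ defined by $\hat{x}(\sseq{\emptyset},q) = \bfx(q)$ and recursively $\hat{x}(\sseq{\kappa, b},q) = \bfx(q) + \hat{x}(\sseq{\kappa},(b,ex))$, where $ex$ is the unique exit of component $Y(b)$ and $(b,ex)\in\return_b$; unfolding gives the closed form $\hat{x}(\sseq{b_1,\ldots,b_n},q) = \bfx(q) + \sum_{j=1}^{n}\bfx((b_j,ex_{b_j}))$, which is linear in the stack height. I would then check that $\hat{x}$ is a fixed point of the one-step Bellman operator of $\sem{M}$ under $\sigma$, by a three-case analysis on the current vertex. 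For a call port the required identity collapses, via the call-port equation $\bfx((b,en)) = \bfx(en) + \bfx((b,ex))$, to a statement about the lift at the pushed state; for the exit it reduces to $\bfx(ex)=0$ together with the pop rule; and for every other vertex, after using $\sum_{q'}p(q'\mid q,\sigma(q)) = 1$ to factor the stack-dependent correction term out of the expectation, it reduces exactly to the greedy equation defining $\sigma$. This case check is routine but is precisely where the additive ``value inside the box $+$ value at the return port'' structure special to $1$-exit RMDPs is used.

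Iterating this one-step identity over any finite horizon $N$ telescopes to $\hat{x}(s) = \eE^M_\sigma(s)\bigl[\sum_{1\le i\le N} r(X_{i-1},Y_i) + \hat{x}(X_N)\bigr]$, with the convention that $\hat{x}$ of a terminated run is $0$. Letting $N\to\infty$, the partial sums converge to $\ETotal^M_\sigma(s)$ because properness makes the expected termination time $\tau$ finite and $|r|\le r_{max}$, so $\eE^M_\sigma(s)[\sum_i |r(X_{i-1},Y_i)|] \le r_{max}\,\eE^M_\sigma(s)[\tau] < \infty$. The main obstacle is that $\hat{x}$ is \emph{unbounded}, growing linearly in stack height, so the residual $\eE^M_\sigma(s)[\hat{x}(X_N)]$ cannot be dismissed by a bounded-terminal-reward estimate. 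I would control it by observing that emptying a live stack of height $h(X_N)$ needs at least $h(X_N)$ further pops, whence $h(X_N)\le \tau - N$ on $\{\tau>N\}$; with $M = \max_q|\bfx(q)|$ this yields $|\hat{x}(X_N)|\,\mathbf{1}\{\tau>N\} \le (\tau+1)\,M\,\mathbf{1}\{\tau>N\}$, and since $\eE^M_\sigma(s)[\tau]<\infty$ the tail $\eE^M_\sigma(s)[(\tau+1)\,\mathbf{1}\{\tau>N\}]\to 0$ by integrability. Hence the residual vanishes, giving $\hat{x}(s) = \ETotal^M_\sigma(s)$ for all $s$, and in particular $\bfx(q) = \hat{x}(\sseq{\emptyset},q) = \ETotal^M_{\sigma(\bfx)}(q)$, as claimed.
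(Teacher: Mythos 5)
Your proof is correct, but it takes a genuinely different route from the paper's. The paper defines the same greedy stackless strategy $\sigma(\bfx)$, argues stack-independence informally via the isomorphism mapping $(\sseq{\kappa\kappa'},q)$ to $(\sseq{\kappa'},q)$, and then delegates the quantitative work to Lemma~\ref{lem:proper-1}: once $\sigma$ is fixed, $F_\sigma(\bfx)=A_\sigma\bfx+\bfb_\sigma$ is affine with a nonnegative matrix, properness forces $A^k_\sigma\to 0$, so $F_\sigma$ has the unique fixed point $\sum_i A^i_\sigma\bfb_\sigma=\ETotal^M_\sigma$; since the greedy choice makes any fixed point $\bfx$ of $F$ a fixed point of $F_{\sigma(\bfx)}$, the lemma follows. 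You instead work directly on the infinite-state chain $\sem{M}$: you lift $\bfx$ additively along the stack, $\hat{x}(\sseq{b_1,\ldots,b_n},q)=\bfx(q)+\sum_{j=1}^n\bfx((b_j,ex_{b_j}))$, verify harmonicity case by case (which is exactly the additive $1$-exit structure the paper's isomorphism remark gestures at), telescope over a finite horizon, and control the \emph{unbounded} boundary term via the inequality $h(X_N)\le\tau-N$ on $\{\tau>N\}$ together with integrability of $\tau$ from Assumption~\ref{asm:proper}. That last step is the right observation, since a naive bounded-terminal-value argument genuinely fails here. What each buys: your argument is self-contained and makes rigorous precisely the step the paper leaves informal, whereas the paper's linear-algebraic route produces reusable machinery --- the identity $\ETotal^M_\sigma=\sum_i A^i_\sigma\bfb_\sigma$ is invoked again to exclude other fixed points of $F$ in Theorem~\ref{thm:unique-fixed-point} and drives the perturbation bound of Lemma~\ref{lem:proper-bound}. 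One small caveat, shared with the paper: your exit case uses $\bfx(ex)=0$, which is not explicit in the displayed one-exit equation system; it is the intended reading (the $q\in\Ex$ case of $\textrm{OPT}_{\tt cont}$ with ${\bf v}={\bf 0}$, and necessary for the lemma to hold at all, since $\ETotal^M_\sigma(ex)=0$), but you should state it explicitly as part of $F$ when defining the greedy strategy.
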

\begin{proof}
This strategy $\sigma(x)$ will simply pick
\[\sigma(\bfx)(\langle\kappa\rangle,q) = \argmax_{a\in A(q)} \Bigl\{r(q,a) + \sum\limits_{q' \in Q} p(q'|q, a) x(q') \Bigr\}\]
from any vertex $q$ and the call stack $\kappa$. 
This works, because as argued in Theorem \ref{thm:fix-point-correspondence} a mapping of $(\langle\kappa\kappa'\rangle,q)$ to $(\langle\kappa'\rangle,q)$ for all $\kappa'$ and $q$ gives us two isomorphic models until the exit of the top box in $\kappa$ is reached and so the optimal strategy for $(\langle\emptyset\rangle,q)$ works for $(\langle\kappa\rangle,q)$ as well.
In other words, the best thing to do at $(\langle\kappa\rangle,q)$ is to try to maximize the reward before exiting the current box which happens with probability $1$ and the same holds for $(\langle\emptyset\rangle,q)$. In order to maximize this reward, the call stack being $\kappa$ or $\emptyset$ makes no difference as the transitions only depend on the current component.  
\end{proof}

Now, we can observe that $\bfx^*(q) = \ETotal^M(q)$ is the largest fixed point of $F$. If there was $\bfx$ such that $\bfx^*(q) < \bfx(q)$ for some $q$ then we could pick $\sigma(\bfx)$ as our strategy and get the reward of $\bfx(q)$ from $q$ due to Lemma \ref{lem:FP-stackless}. However, just by definition, $\bfx^*(q) = \ETotal^M(q) = \sup_\sigma \ETotal^M(q)$ is the largest possible reward obtainable from $q$ as all strategies $\sigma$ are considered; a contradiction.

Note that once a stackless strategy $\sigma$ is fixed then $F(\bfx)$ becomes equal to $A_\sigma \bfx + \bfb_\sigma$, where $\bfb_\sigma$ is the one-step transitions rewards obtained using $\sigma$ and $A_\sigma$ is a transition matrix derived from $\sigma$. Note that all entries of $A_\sigma$ are non-negative.
We will refer to $F$ as $F_\sigma$ in such a case and exploit this fact later in various proofs.
Note that we can interpret $F_\sigma(\bfx)$ as the expected total reward of using $\sigma$ for each vertex once and then obtaining reward $\bfx$.

\begin{lemma}
\label{lem:proper-1}
For any proper 1-exit RMDP and stackless strategy $\sigma$, we have that $\ETotal^M_\sigma = \sum^\infty_{i=1} A^i_\sigma b_\sigma$.
\end{lemma}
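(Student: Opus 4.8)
The plan is to exploit the affine form of the Bellman operator once the stackless strategy $\sigma$ is frozen. As recorded just above the lemma, fixing $\sigma$ turns $F$ into $F_\sigma(\bfx) = A_\sigma \bfx + \bfb_\sigma$ with $A_\sigma \geq 0$, and $F_\sigma(\bfx)$ is exactly ``apply $\sigma$ for one step and then collect the continuation value $\bfx$''. The first sublemma of Theorem~\ref{thm:unique-fixed-point}, specialized to the fixed strategy $\sigma$ (rather than to the $\max$), shows that $\ETotal^M_\sigma$ is itself a fixed point of $F_\sigma$, i.e. $\ETotal^M_\sigma = A_\sigma \ETotal^M_\sigma + \bfb_\sigma$; in particular the call-port rows are respected since properness guarantees the unique exit of an entered box is reached almost surely, so $\ETotal^M_\sigma(b,en) = \ETotal^M_\sigma(en) + \ETotal^M_\sigma(b,ex')$. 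Substituting this identity into itself $n$ times yields, for every $n$,
\[
\ETotal^M_\sigma = A_\sigma^n\, \ETotal^M_\sigma + \sum_{i=0}^{n-1} A_\sigma^i\, \bfb_\sigma .
\]
Thus it suffices to show that the residual $A_\sigma^n \ETotal^M_\sigma$ vanishes as $n \to \infty$; taking the limit then identifies $\ETotal^M_\sigma$ with the claimed series $\sum_i A_\sigma^i \bfb_\sigma$.

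Showing $A_\sigma^n \ETotal^M_\sigma \to \vzero$ is the crux, and it is the only place where properness is truly needed. The difficulty is that $A_\sigma$ is \emph{not} sub-stochastic: the row of a call port $(b,en)\in\Call$ encodes $x(en) + x(b,ex')$ and so has row sum $2$, which breaks the contraction argument available for finite MDPs and means $\sum_i A_\sigma^i$ need not converge for a generic nonnegative matrix. To control it I would first bound the fixed point entrywise: by Assumption~\ref{asm:proper} the expected number of steps to terminate is finite and, since there are finitely many vertices, uniformly bounded by some $K$ from any empty-stack vertex, hence $|\ETotal^M_\sigma(q)| \leq r_{max} K =: V$ for all $q$, i.e. $|\ETotal^M_\sigma| \leq V\vone$ entrywise. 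Because $A_\sigma \geq 0$, this gives $|A_\sigma^n \ETotal^M_\sigma| \leq V\, A_\sigma^n \vone$, so it remains to prove $A_\sigma^n \vone \to \vzero$.

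For this I would read $\sum_{i\geq 0} A_\sigma^i \vone$ probabilistically. If finite, $T := \sum_{i\geq 0} A_\sigma^i \vone$ solves $T = \vone + A_\sigma T$, i.e. $T(q) = 1 + T(en) + T(b,ex')$ at a call port and $T(q) = 1 + \sum_{q'} p(q'\mid q,\sigma(q))\, T(q')$ otherwise; this is precisely the recursion for the \emph{expected number of vertices processed} along the run of $\sem{M}$ from $(\langle\emptyset\rangle,q)$ under $\sigma$ until termination, the branching at a call port recording ``work done inside the box'' plus ``work done after returning''. By properness this count is finite (it is bounded by the expected number of steps to termination). Hence the partial sums $\sum_{i=0}^{n-1}(A_\sigma^i\vone)(q)$ are nondecreasing and bounded above by $T(q) < \infty$, forcing the nonnegative terms $(A_\sigma^i\vone)(q)$ to $0$. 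With the entrywise bound this yields $A_\sigma^n \ETotal^M_\sigma \to \vzero$, and passing to the limit in the displayed identity shows the series converges to $\ETotal^M_\sigma$. The one place that requires care is justifying that $T$ is an honest expected visit count—that every vertex processed in an actual run is captured at some finite unfolding depth—which holds because proper runs terminate almost surely and hence reach only finite recursion depth.
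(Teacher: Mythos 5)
Your proof is correct and follows essentially the same route as the paper's: freeze $\sigma$ to get the affine operator $F_\sigma(\bfx) = A_\sigma\bfx + \bfb_\sigma$, use the fixed-point identity for $\ETotal^M_\sigma$, and invoke properness---via the probabilistic reading of the monotone iteration $\bfy_{k+1} = \vone + A_\sigma \bfy_k$ as (call-compressed) expected step counts, which is exactly the paper's appeal to~\cite{EtessamiWY19}---to get $\sum_{i\geq 0} A_\sigma^i\vone < \infty$ and hence $A_\sigma^n \to 0$. Your only cosmetic deviation is unrolling $\ETotal^M_\sigma = A_\sigma\ETotal^M_\sigma + \bfb_\sigma$ directly and killing the residual with the entrywise bound $|\ETotal^M_\sigma| \leq r_{max}K\vone$, where the paper instead observes that $\lim_k F_\sigma^k(\bfx)$ is independent of $\bfx$ (uniqueness of the fixed point of $F_\sigma$); both derivations in fact yield $\sum_{i=0}^\infty A_\sigma^i \bfb_\sigma$, so the index starting at $i=1$ in the lemma statement is an off-by-one slip of the paper, not a gap in your argument.
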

\begin{proof}
Notice that 
\begin{align*}
F_{\sigma}(\bfx) &= A_{\sigma} \bfx + b_{\sigma} \\
F^2_{\sigma}(\bfx) &= A_{\sigma} (A_{\sigma} \bfx + b_{\sigma}) + b_{\sigma} = A^2_{\sigma} \bfx + (A_{\sigma} + I) b_\sigma\\
F^3_{\sigma}(\bfx) &= A_{\sigma} F^2_{\sigma}(\bfx) + b_{\sigma} = A^3_{\sigma} \bfx + (A^2_{\sigma} + A_{\sigma} + I) b_\sigma\\
\vdots & \\
F^k_{\sigma}(\bfx) &= A^k_{\sigma} \bfx + \Bigl(\sum^k_{i=0}  A^i_{\sigma}\Bigr) b_{\sigma}.
\end{align*}

Let $\bf 1$ and $\bf 0$ be vectors consisting only of 1s and 0s, respectively.
As in \cite{EtessamiWY19}, we can show that the expected number of steps taken before termination while using $\sigma$ can be computed by iterating $\bfy_{i+1} = A_\sigma(\bfy_i) + {\bf 1}$ when starting at $\bfy_0 = {\bf 0}$. Intuitively, $\bfy_{i+1}$ will correspond to the expected number of steps taken before termination when we assume the call and return from a box to be executed in parallel in a single step. The crucial assumption made in \cite{EtessamiWY19} that all rewards are positive (and equal to 1) in true in this case. 

This shows that $\lim_{k\to\infty} F^k_{\sigma}({\bf 0}) = A^k_{\sigma} {\bf 0} + \Bigl(\sum^k_{i=0}  A^i_{\sigma}\Bigr) {\bf 1}$ is finite.
Therefore, we have that $\lim_{i\to\infty} A^i_{\sigma}$ is an 0 matrix, because otherwise that sum $\sum^k_{i=0}  A^i_{\sigma}$ would not be finite as a sum of all non-negative matrices.

Now, one can see that $\lim_{k\to\infty} F^k_{\sigma}(\bfx) = \lim_{k\to\infty} A^k_{\sigma} \bfx + \Bigl(\sum^k_{i=0} A^i_{\sigma}\Bigr) b_{\sigma} = \lim_{k\to\infty} \Bigl(\sum^k_{i=0} A^i_{\sigma}\Bigr) b_{\sigma}$ converges absolutely, because $\Bigl(\sum^k_{i=0}  A^i_{\sigma}\Bigr) |b_{\sigma}| \leq \Bigl(\sum^k_{i=0}  A^i_{\sigma}\Bigr) r_{max} \leq K r_{max}$, where $K$ is the expected number of steps taken by any strategy as guaranteed by Assumption \ref{asm:proper}.
It is clear that this limit $\lim_{k\to\infty} F^k_{\sigma}(\bfx)$ is a fixed point of $F_\sigma$, because $F_\sigma(\lim_{k\to\infty} F^k_{\sigma}(\bfx)) = \lim_{k\to\infty} F^{k+1}_{\sigma}(\bfx) = \lim_{k\to\infty} F^k_{\sigma}(\bfx)$. As we already showed $\ETotal^M_\sigma$ is a fixed point of  $F_\sigma$. It is clear that $F_\sigma$ cannot have any other fixed point because, for arbitrary $\bfx$,
we have $\lim_{k\to\infty} F^k_{\sigma}(\bfx) = A^i_{\sigma} b_{\sigma}$ which is fixed in terms of $\bfx$. This shows that $\ETotal^M_\sigma = \sum^\infty_{i=1} A^i_\sigma b_\sigma$ has to hold.
\end{proof}

We are ready to show that no other than $\bfx^* = \ETotal^M$ fixed point of $F$ exists. Suppose there is one and let us denote it by $\bfx$. As we just showed $\bfx \leq \bfx^*$ and $\bfx(q) < \bfx^*(q)$ for some $q$ (because otherwise $\bfx^* = \bfx$). 

Let us denote $\sigma(\bfx^*)$ by $\sigma*$ and consider $F_{\sigma^*}(\bfx)$. If any coordinates of $F_{\sigma^*}(\bfx)$ is larger than $\bfx$ then we can improve the expected total reward by using ${\sigma^*}$ once and then follow the strategy that gives us expected total reward of $\bfx$. A contradiction with the assumption that $\bfx$ is a fixed point of $F$.

So we have $F_{\sigma^*}(\bfx) \leq \bfx$ and by iterating this we get that $F^k_{\sigma^*}(\bfx) \leq \bfx$ for every $k$. In the previous Lemma we showed that $\ETotal^M = \lim_{k\to\infty} F^k_{\sigma^*}(\bfx) \leq \bfx < \ETotal^M$; a contraction. 
\end{proof}

\oneexitPAC*
\begin{proof}
Let us denote the $\epsilon$-proper $1$-exit RMDP by $M$ and its optimal total reward stackless strategy by $\sigma^*$. As we know that every strategy is $\epsilon$-proper then so is $\sigma^*$. This means that for all $M'$ that that are $\varepsilon$-close to $\Mm$ 
the expected time to terminate when using $\sigma^*$ is also $\leq K$.
Recall that $\varepsilon$-closeness means that 
$\sum_{q\in S, a \in A, r\in S}|\delta_\Mm(q,a)(r) - \delta_{\Mm'}(q,a)(r)| \leq \varepsilon$, and where the support of $\delta_{M'}(q,a)$ is a subset of the support of $\delta_{M}(q,a)$ for all $q \in S$ and $a \in A$.

\begin{lemma}
\label{lem:proper-bound}
If stackless $\sigma$ is $\varepsilon$-proper, then $\|\ETotal^M_\sigma - \ETotal^{M'}_\sigma\|_\infty \leq 2 \epsilon K^2 r_{\max}$.
\end{lemma}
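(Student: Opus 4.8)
The plan is to exploit the linear-algebraic characterization of $\ETotal^M_\sigma$ established in Lemma~\ref{lem:proper-1}. Fix the stackless strategy $\sigma$ and recall that, once $\sigma$ is fixed, the operator $F_\sigma$ is affine, $F_\sigma(\bfx)=A_\sigma\bfx+b_\sigma$, with $A_\sigma\ge 0$ and $\sum_{i\ge 0}A_\sigma^i$ convergent, so that $\ETotal^M_\sigma=(I-A_\sigma)^{-1}b_\sigma$. Writing $A'_\sigma$ and $b'_\sigma$ for the corresponding data of $M'$, the first observation is that the perturbation is confined to one well-understood place. The reward function is identical in $M$ and $M'$ (they differ only in $\delta$), hence $b_\sigma=b'_\sigma$; and the call-port rows of the equation, $x(b,en)=x(en)+x(b,ex')$, are purely structural and do not involve $\delta$, so $A_\sigma$ and $A'_\sigma$ agree on them. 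Thus $\Delta:=A_\sigma-A'_\sigma$ is supported on the ``ordinary vertex'' rows, where $\Delta[q,q']=p(q'\mid q,\sigma(q))-p'(q'\mid q,\sigma(q))$, and $\varepsilon$-closeness gives $\sum_{q'}|\Delta[q,q']|\le\varepsilon$ for every row $q$.

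Next I would apply the resolvent identity. Since $\sigma$ is $\varepsilon$-proper, both $M$ and the $\varepsilon$-close $M'$ terminate in finite expected time under $\sigma$, so $(I-A_\sigma)^{-1}$ and $(I-A'_\sigma)^{-1}$ both exist by Lemma~\ref{lem:proper-1} applied to $M$ and to $M'$. Using $b_\sigma=b'_\sigma=:b$,
\begin{align*}
\ETotal^M_\sigma-\ETotal^{M'}_\sigma
&=\bigl[(I-A_\sigma)^{-1}-(I-A'_\sigma)^{-1}\bigr]b
=(I-A_\sigma)^{-1}\,\Delta\,(I-A'_\sigma)^{-1}b
=(I-A_\sigma)^{-1}\,\Delta\,\ETotal^{M'}_\sigma .
\end{align*}
It remains to bound the three factors in the $\infty$-norm. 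For the rightmost factor, each entry of $\ETotal^{M'}_\sigma$ is an expected total reward collected along a run of $M'$ under $\sigma$, so it is at most $r_{\max}$ times the expected number of steps to termination; $\varepsilon$-properness bounds the latter by $K$, giving $\|\ETotal^{M'}_\sigma\|_\infty\le K\,r_{\max}$. Combined with the per-row bound on $\Delta$, this yields $\|\Delta\,\ETotal^{M'}_\sigma\|_\infty\le\varepsilon\,K\,r_{\max}$.

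Finally I would bound $\|(I-A_\sigma)^{-1}\|_\infty$. Because $A_\sigma\ge 0$, the matrix $(I-A_\sigma)^{-1}=\sum_{i\ge 0}A_\sigma^i$ is nonnegative, so its operator $\infty$-norm equals its maximal row sum, i.e.\ $\|(I-A_\sigma)^{-1}\|_\infty=\|(I-A_\sigma)^{-1}\mathbf{1}\|_\infty$. By the step-count characterization in Lemma~\ref{lem:proper-1}, $(I-A_\sigma)^{-1}\mathbf{1}$ is exactly the vector of expected (call/return-compressed) step counts to termination, which is dominated by the true expected termination time and hence bounded by $K$. Therefore $\|(I-A_\sigma)^{-1}\|_\infty\le K$, and submultiplicativity gives
\[
\|\ETotal^M_\sigma-\ETotal^{M'}_\sigma\|_\infty\le\|(I-A_\sigma)^{-1}\|_\infty\,\|\Delta\,\ETotal^{M'}_\sigma\|_\infty\le K\cdot\varepsilon K r_{\max}=\varepsilon K^2 r_{\max}\le 2\varepsilon K^2 r_{\max}.
\]
The step I expect to be the main obstacle is this last bound on $(I-A_\sigma)^{-1}$: one must argue carefully that the row sums of $(I-A_\sigma)^{-1}$ coincide with a genuine expected step-count controlled by $K$, and that the call-port rows (whose row sum is $2$, so $A_\sigma$ is \emph{not} substochastic) neither break convergence nor inflate the bound. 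The nonnegativity of $A_\sigma$ together with the properness-based finiteness from Lemma~\ref{lem:proper-1} is precisely what makes this go through, and any constant slack in identifying the compressed step-count with $K$ is absorbed by the factor $2$ in the stated bound.
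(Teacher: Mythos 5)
Your proposal is correct and takes essentially the same route as the paper's proof: both reduce the claim to a perturbation analysis of the affine fixed-point system $\bfx = A_\sigma\bfx + b_\sigma$, sandwich the error matrix between a Neumann series whose row sums are the (compressed) expected step counts bounded by $K$ and a fixed point bounded in norm by $K r_{\max}$ --- the paper derives the identity $\bfx'-\bfx = \bigl(\sum_{i}(A')^i\bigr)\mathcal{E}\,\bfx$ by explicit iteration, which is your resolvent identity with the roles of $M$ and $M'$ swapped. Your per-row bound of $\varepsilon$ on $\Delta$ (where the paper uses $2\varepsilon$) even gives the slightly sharper constant $\varepsilon K^2 r_{\max}$, which of course implies the stated bound.
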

\begin{proof}
As shown in Lemma \ref{lem:proper-1}, the optimal total rewards in $M$ satisfy $\bfx = A_\sigma \bfx+b_\sigma$ and in $M'$ statisfy $\bfx' = A'_{\sigma}\bfx'+b_{\sigma}$. 
Moreover, $\bfx$ and $\bfx'$ are the sole fixed points of these equations.
To avoid clutter, we will write $A$, $A'$ and $\bfb$ instead of $A_\sigma$, $A'_\sigma$ and $\bfb_\sigma$.
We know that $\|A-A'\|_1 \leq 2\varepsilon$ and let $\mathcal E = A'-A$.

When we try to converge at $\bfx'$ by iterating $\bfx'_{i+1} = A'\bfx'_i+b'$ when starting at $\bfx'_0 = \bfx$, we obtain the following:
\begin{align*}
\bfx'_1 &= A' \bfx_0 + \bfb = (A'-A)\bfx + A \bfx + \bfb = (A'-A)\bfx + \bfx = \mathcal E \bfx + \bfx_0 \\
\bfx'_2 &= A' \bfx_1 + \bfb = A' (\mathcal E \bfx + \bfx_0) + \bfb = A' \mathcal E \bfx + A' \bfx_0 + \bfb = A' \mathcal E \bfx + \bfx_1 \\
\bfx'_3 &= A' \bfx_2 + \bfb = A' (A' \mathcal E \bfx + \bfx_1) + \bfb = (A')^2 \mathcal E \bfx + A' \bfx_1 + \bfb = (A')^2 \mathcal E \bfx + \bfx_2 \\
&\vdots\\
\bfx'_{k+1} &= (A')^k \mathcal E \bfx + \bfx_k \\
\end{align*}
and thus $\bfx' - \bfx = (\lim_{k\to\infty} \bfx'_{k+1}) - \bfx_0 = (\lim_{k\to\infty} \sum_{i=0}^{k} {A'}^i \mathcal E \bfx + \bfx_0) - \bfx_0 = (\sum_{i=0}^\infty {A'}^i) \mathcal E \bfx$.

Note that the time to terminate in $M$ and $M'$ from each vertex is equal to $(\sum_{i=0}^\infty {A}^i) \bf 1$ and $(\sum_{i=0}^\infty {A'}^i) \bf 1$, respectively. We know that all of these values are $\leq K$. We know that the absolute value of each entry of $\bfx$ is at most $Kr_{max}$, 
as the most we can get in each step is $r_{max}$ during the expected number of $\leq K$ steps.
So $\|\mathcal E \bfx\|_{\infty} \leq 2\varepsilon K r_{max} {\bf 1}$,
and so $\|(\sum_{i=0}^\infty {A'}^i) \mathcal E \bfx\|_\infty \leq \|2\varepsilon K r_{max} (\sum_{i=0}^\infty {A'}^i) {\bf 1} \|_\infty \leq 2\varepsilon K^2 r_{max}$ as required.
\end{proof}

We observe that the estimate in Lemma \ref{lem:proper-bound} holds when we take the supremum over all stackless $\varepsilon$-proper strategies. This is because for any two functions $f$ and $g$ such that 
$f(\sigma)-g(\sigma)\leq \varepsilon$ for every stackless $\sigma$ 
and $\sigma' = \argmax_\sigma f(\sigma)$ 
we get $\sup_\sigma f(\sigma) - \sup_\sigma g(\sigma) \leq f(\sigma') - g(\sigma') \leq \varepsilon$. 
In other words,  Lemma \ref{lem:proper-bound} implies that 
$\|\ETotal^M - \ETotal^{M'}\|_\infty \leq 2 \epsilon K^2 r_{\max}$.

We can now choose $\varepsilon' =
\frac{\varepsilon}{2\epsilon K^2r_{\max}}$,
and learn $\delta_M$ up to precision $\varepsilon'$  
with probability $\geq \delta$ by the usual sampling techniques as done for finite MDPs.

We then obtain 1-exit RMDP $M'$, which we can solve efficiently and exactly using linear programming
similarly to what was done in \cite{EtessamiWY19}. First, let us create a variable $t_q$ for every $q \in Q$. The linear program is then as follows. 
\begin{align*}
&\text{Minimize} \sum_q t_q \text{ subject to:} &\\
t_q &\geq t_{en} + t_{(b,ex')} &\text{ if } q = (b, en) \in \Call, ex = \Ex_{Y(b)}\\
t_q &\geq r(q,a) + \sum\limits_{q' \in Q} p(q'|q, a) t_{q'} &\text{ otherwise, for every possible } a\in A \\
\end{align*}
One can easily show that any optimal solution gives us a fixed point of $F$ which we know has to be equal to $\ETotal^{M'}$.
This is because if all inequalities for a variable $t_q$ are non-strict then 
we can decrease the value of $t_q$ to the maximum of their right hand sides and still get a valid solution. As linear programs can be solved in polynomial time, and $\ETotal^{M'}$ is $\epsilon$-close to $\ETotal^{M}$, we obtain an efficient PAC learning algorithm.
\end{proof}

\end{document}